\newtheorem{theorem}{Theorem}[section]
\newtheorem{lemma}[theorem]{Lemma}
\newtheorem{assumption}[theorem]{Assumption}
\newtheorem{remark}[theorem]{Remark}
\DeclareMathOperator*{\argmin}{argmin} 
\DeclareMathOperator*{\argmax}{argmax}
\DeclareMathOperator*{\E}{\mathbb{E}}
\DeclareMathOperator{\ba}{\boldsymbol{a}}
\DeclareMathOperator{\bs}{\boldsymbol{s}}
\DeclareMathOperator{\bP}{\boldsymbol{P}}
\DeclareMathOperator{\bR}{\boldsymbol{R}}
\DeclareMathOperator{\blambda}{\mathbf{\lambda}}
\newcommand{\norm}[1]{\left\lVert#1\right\rVert_1}
\newcommand{\abs}[1]{\left\lvert#1\right\rvert}
\newcommand{\rad}[1]{2|S|\log(|S||A|#1/\delta)}
\newcommand{\upV}{\frac{N(1 + U_{\lambda})}{1-\gamma}}
\newcommand{\PR}[1]{\text{Pr}\left(#1\right)}
\newcommand{\rmab}{NS-Whittle}
\newcommand{\ie}{\textit{i.e., }}
\title{Non-Stationary Restless Multi-Armed Bandits with Provable Guarantee}
\author{%
  Yu-Heng Hung\thanks{\href{}{https://hungyuheng.github.io/yuheng/}} \\
  Department of Computer Science\\
  National Yang Ming Chiao Tung University, Hsinchu, Taiwan \\
  \texttt{hungyh.cs08@nycu.edu.tw} \\
  \And
  Ping-Chun Hsieh \\
  Department of Computer Science \\
  National Yang Ming Chiao Tung University, Hsinchu, Taiwan \\
  \texttt{pinghsieh@nycu.edu.tw} \\
  \AND
  Kai Wang \\
  Georgia Institute of Technology \\
  \texttt{kwang692@gatech.edu} \\
}
\begin{document}

\maketitle

\begin{abstract}
Online restless multi-armed bandits (RMABs) typically assume that each arm follows a stationary Markov Decision Process (MDP) with fixed state transitions and rewards. However, in real-world applications like healthcare and recommendation systems, these assumptions often break due to non-stationary dynamics, posing significant challenges for traditional RMAB algorithms. In this work, we specifically consider $N$-armd RMAB with non-stationary transition constrained by bounded variation budgets $B$. Our proposed \rmab\; algorithm integrates sliding window reinforcement learning (RL) with an upper confidence bound (UCB) mechanism to simultaneously learn transition dynamics and their variations. We further establish that \rmab\; achieves $\widetilde{\mathcal{O}}(N^2 B^{\frac{1}{4}} T^{\frac{3}{4}})$ regret bound by leveraging a relaxed definition of regret, providing a foundational theoretical framework for non-stationary RMAB problems for the first time.
\end{abstract}

\section{Introduction}
Restless multi-armed bandits (RMABs) provide a powerful framework for sequential decision-making in dynamic environments. An RMAB instance consists of $N$ arms, each evolving independently as a Markov decision process (MDP), whose state continues to evolve according to its underlying dynamics, irrespective of whether it is chosen. Unlike traditional multi-armed bandits (MABs), RMABs allow for non-i.i.d. reward distributions that depend on time-varying states, making them well-suited for real-world applications such as queuing systems~\cite{ansell2003whittle,nino2002dynamic}, healthcare interventions~\cite{mate2022field,mate2020collapsing}, and wireless networks~\cite{wei2010distributed,sun2018cell}. However, in practical scenarios, these dynamics are often unknown a priori, transforming RMABs into an online learning problem requiring a balance between exploration and exploitation with regret guarantees. 

Recent work has made strides in developing algorithms for online RMABs under the assumption of stationary dynamics~\cite{wang2023optimistic}. However, this assumption often fails to hold in real-world settings. For example, in many real-world applications, such as healthcare, patient responses to treatments may evolve over time, while external factors like changing user preferences or environmental influences introduce non-stationarity. This necessitates the development of algorithms that can handle non-stationary dynamics. 

To address the non-stationary dynamics, \cite{cheung2020reinforcement} proposed a sliding window approach to handle non-stationary MDPs. A straightforward idea is to treat the entire RMAB problem as a single large MDP and apply the same method. While this approach simplifies the learning framework, it introduces significant computational challenges, as it requires maintaining and updating transition estimates for a joint state space that grows exponentially with the number of arms $N$ and the size of the state space increase. This compounded complexity results in the curse of dimensionality, making such methods intractable for large-scale, high-dimensional RMAB problems.

Most RMAB settings impose budget constraints on the selected arms, adding another layer of complexity. Under such constraints, the definition of regret becomes less straightforward, as it must account for both the non-stationary dynamics and the time-varying nature of the Lagrange multiplier. This means the underlying constrained MDP is no longer directly compatible with the original algorithm proposed in \cite{cheung2020reinforcement}. This challenge is further compounded by the fact that non-stationary RMABs require a dynamically changing Bellman equation, where the value function, policy, and Lagrange multiplier all adapt over time. At this point, a natural question arises:

\begin{quote}
    \centering \emph{Can we design a simple, computationally tractable algorithm that can operate in a high-dimensional non-stationary restless multi-armed bandit environment?}
\end{quote}

This paper provides a positive answer to this question. We first define a new regret proxy that approximates the dynamic regret using a stationary value function. By fixing the transition dynamics, policy, and Lagrange multiplier at their current value, we reduce the problem to a sequence of simpler, stationary subproblems, while 
 capturing the essential optimality gap and significantly reducing computational complexity, bridging the gap between theory and practice for non-stationary RMABs. 
 
Our approach builds on two key insights:

\begin{itemize}[leftmargin=*]
    \item \textbf{Arm-Specific Sliding Windows}: Instead of treating the entire RMAB as a single large MDP, we maintain transition estimates separately for each arm.  This significantly reduces the computational burden, enabling more localized learning and avoiding the exponential blow-up in the joint state space. This arm-specific estimate also allows arm-specific sliding windows depending on the transition function changes.
    \item \textbf{Non-stationary optimistic Whittle index}: We apply the minimax theorem to compute the optimistic non-stationary transition probabilities in the confidence intervals estimated by sliding windows. The optimistic transition probabilities later are used to compute the non-stationary Whittle index and update the pulling threshold. 
\end{itemize}
These techniques significantly reduce both the dimensionality and computational overhead, making the approach scalable even in large, high-dimensional, non-stationary environments. 

Crucially, unlike treating the entire RMAB as a single large MDP with state space $\mathcal{S}^N$, which leads to regret bounds that scale exponentially with $N$, our approach maintains only a quadratic dependence on $N$ by decoupling the joint optimization across arms. This effectively avoids the curse of dimensionality by leveraging the weak dependency between arms, providing a scalable and theoretically grounded solution to non-stationary RMABs. Furthermore, our approach achieves a regret bound of $\widetilde{\mathcal{O}}(N^2 |\mathcal{S}|^{\frac{1}{2}} B^{\frac{3}{4}} \delta^{-\frac{1}{2}} T^{\frac{3}{4}})$ with probability $1 - \delta$, where $N$ is the number of arms and $|\mathcal{S}|$ represents the shared state space of the independent MDPs.  This is a significant improvement over the naive approach of treating the RMAB as a single non-stationary MDP to use the regret bound in~\cite{cheung2020reinforcement}, which would result in a much larger regret bound of $\widetilde{\mathcal{O}}(|\mathcal{S}|^{\frac{2N}{3}}B^{\frac{1}{4}} T^{\frac{3}{4}})$ due to the exponential state space. 

In particular, when budget $B =0$, i.e., the transition is stationary, our regret bound reduces to the best known regret bound $\Tilde{O}(T^{\frac{1}{2}})$ for the stationary non-episodic and episodic RMAB settings as shown in \cite{wang2023optimistic,jung2019regret,jung2019thompson}, effectively bridging the gap between the stationary case and the stationary case.

\section{Related Work}
\paragraph{Offline restless multi-armed bandits (RMABs)}
Restless multi-armed bandits (RMABs)~\cite{whittle1988restless} generalize multi-armed bandits (MABs)~\cite{slivkins2019introduction,kuleshov2014algorithms} by allowing multiple ($K$) arms to be pulled at the same time, where arms are modeled as independent MDPs.
With the MDP parameters given, optimizing the RMAB as an offline optimization problem is known to be PSPACE-hard~\cite{papadimitriou1999complexity}. As an approximate solution, the Whittle index policy~\cite{weber1990index} is shown to be asymptotically optimal under the indexability condition~\cite{liu2008restless,liu2010indexability}. Many Whittle index-based approaches~\cite{robledo2022qwi,biswas2021learn,fu2019towards} were proposed to solve RMABs in polynomial time.

\paragraph{Online learning in MDPs}
When the MDP parameters are unknown a priori, the goal is to interact with the environment to learn the MDP parameters and simultaneously maximize the cumulative rewards.
In stationary infinite-horizon MDPs, algorithms~\cite{auer2008near,auer2006logarithmic,bartlett2012regal} with $\tilde{O}(S \sqrt{A T})$ upper bound on regret under different conditions and a lower bound $\Omega(\sqrt{SAT})$ were shown, where $S$ and $A$ denotes the state and action space size.
In non-stationary infinite-horizon MDPs with limited changes in MDP parameters, \cite{cheung2020reinforcement,wei2021non} propose the \textit{SWUCRL2-CW} algorithm with $O(T^{\frac{3}{4}})$ regret upper bound and no lower bound is known.
When the number of changes is known, ~\cite{gajane2018sliding} shows a $O(T^{\frac{2}{3}})$ regret bound.
For episodic non-stationary MDPs, ~\cite{mao2021near} shows a $\tilde{O}(T^{\frac{2}{3}})$ regret with a lower bound $\Omega(T^{\frac{2}{3}})$. A detailed summary and a master approach that handle non-stationary MDPs can be found in ~\cite{wei2021non}.


\paragraph{Online learning in RMABs}
For RMABs, due to the exponentially large state and action space in RMABs, directly applying the results from MDPs can incur exponentially large regret bound.
For stationary RMABs, when the rewards are collected only by the arms pulled, \cite{liu2012learning} demonstrate a $O(\log T)$ regret bound. When rewards are collected from all arms, the problems become significantly hard where \cite{jung2019regret} use a Thompson sampling approach to achieve a $\tilde{O}(\sqrt{T})$ Bayesian regret bound, while \cite{wang2023optimistic} use UCB to show a $\tilde{O}(N |S| \sqrt{T \log T})$ frequentist regret bound. 
For non-stationary RMABs, there has been no prior work on the regret analysis. Our work provides the first regret bound that scales linearly in the number of arms in non-stationary RMABs.

\section{Problem Statement}
\subsection{Non-Stationary Restless Bandits} 
We consider an RMAB problem with $N$ restless arms over a time horizon of $T$ time steps. At each time step $t \in [T]$, the learner selects at most $K$ arms to activate, where $K \leq N$ is the activation budget, after observing the state $s_{t,i}$ of each arm $i$.
Each arm $i \in [N]$ is associated with a non-stationary discounted MDP $\mathcal{M}_i:=(\mathcal{S},\mathcal{A},\gamma,R_i,\{P_{t,i}\}_{t=1}^T)$, where $\mathcal{S}$ is the state space, and $\mathcal{A}=\{0,1\}$ denotes the action space (1 for active and 0 for passive), $\gamma\in [0,1)$ is the discount factor, $R_{i}(s_{t,i},a_{t,i})$ denotes the state-action-dependent per-arm reward function, where are stationary and known to the learner, and  $P_{t,i}:\mathcal{S}\times \mathcal{A}\rightarrow \Delta(\mathcal{S})$\footnote{Throughout the paper, for a set $\mathcal{Z}$, we use $\Delta(\mathcal{Z})$ to denote the set of all the probability distributions over $\mathcal{Z}$.} determines the state transition probability under non-stationarity, \ie 
\begin{align}
    P_{t,i}(s' \mid s, a) = \Pr(s_{t+1,i} = s' \mid s_{t,i} = s, a_{t,i} = a).
    \label{eq:def of transition}
\end{align}

\paragraph{Budget constrain for state transition} To capture the non-stationarity of the environment, we introduce a finite variation budget that limits the extent to which transition probabilities can shift over time. Specifically, we define the per-step variation for each arm $i$ at time $t$ as:
\begin{align}
    B_{t,i} &:= \max_{s,a}\lVert P_{t+1,i}(\cdot|s,a) - P_{t,i}(\cdot|s,a)\rVert_1, \label{eq:budget}
\end{align}
where this budget captures the maximum possible change in the transition dynamics between consecutive time steps. The total allowable variation across all arms and time steps is then given by $B := \sum_{t=1}^T \sum_{i=1}^N B_{t,i}$. For a stationary MDP, this variation is zero, meaning the transition matrices, $P_{t,i}$ remain identical across all time steps, reflecting time-invariant dynamics.

\paragraph{Nonstationary policy} A stochastic policy in the multi-armed restless bandit setting is a probability distribution over actions given the state. Formally, a stochastic policy for arm $i$ at time $t$ is given by: $\pi_{t,i}(a_{t,i}|s_{t,i}) = \Pr(a_{t,i}|s_{t,i})$, where $a_{t,i}\in \{0,1\}$ indicates whether the arm is activated $(1)$ or passive $(0)$. To respect the activation budget, this policy must satisfy the constraint: $\sum_{i=1}^{N} \pi_{t,i}(1 | s) = K, \forall s\in \mathcal{S}$, ensuring that no more than $K$ arms are activated at any given time in expectation.             

\paragraph{Joint MDP Formulation}
To analyze regret effectively, it is convenient to represent the entire RMAB system as a single, unified MDP. This joint MDP formulation aggregates the individual state, action, and transition dynamics of each arm as follows: (i) Global state: $\bs_t := (s_{t,1}, s_{t,2}, \dots, s_{t,N})$, representing the concatenated state vector of all arms at time $t$; (ii)  Global action $\ba_t := (a_{t,1}, a_{t,2}, \dots, a_{t,N})$, the concatenated action vector; (iii) Joint transition dynamic: Given by the product of individual arm transitions, $\bP_t(\bs_{t+1} \mid \bs_t, \ba_t) := \prod_{i=1}^N P_{t,i}(s_{t+1,i}|s_{t,i},a_{t,i})$, capturing the joint transition probability of the global state at time $t$; (iv) Joint reward function: The sum of the individual rewards, $\bR_t(\bs_t,\ba_t) := \sum_{i=1}^{N} R_i(s_{t,i}, a_{t,i}) $ , representing the total reward at time $t$; (v) Joint policy: The product of individual arm policies, given by
\begin{align}
    \pi_t(\ba_t \mid \bs_t) = \prod_{i=1}^{N} \pi_{t,i}(a_{t,i} | s_{t,i}),
\end{align}
ensuring that the overall policy captures the joint decision-making process across all arms. This joint MDP representation is critical for defining and analyzing regret in non-stationary RMABs, as it captures the full complexity of the system while allowing for efficient decomposition into more manageable subproblems.

\subsection{Optimal Policy and Regret}
To evaluate the performance of the proposed algorithm in non-stationary MDPs, we rely on value functions, which capture the expected total discounted reward of a given policy $\pi$ over time. Specifically, the value function $V^{\pi}_{\bP,\blambda}$ and the action-value function $Q^{\pi}_{\bP,\blambda}$ under a joint policy $\pi$ and transition matrix $\bP$ with dual variable $\blambda$ are defined as:

\begin{align}
    V^{\pi}_{\bP,\blambda}(\bs) :=& \mathbb{E}_{(\bs,\ba) \sim (\bP,\pi)}\left[ \sum_{t\in \mathbb{N}} \gamma^{t-1} \underbrace{\left(\bR(\bs_t,\ba_t) - \blambda\left(\pi(\bs_t)^\top \mathbbm{1} - K\right)\right)}_{:=\bR_{\blambda}(\bs_t,\ba_t)} | \bs_1=\bs\right] \label{eq:relax V} \\
    Q^{\pi}_{\bP,\blambda}(\bs,\ba) :=& \mathbb{E}_{(\bs,\ba) \sim (\bP,\pi)}\left[ \sum_{t\in \mathbb{N}} \gamma^{t-1} \left(\bR(\bs_t,\ba_t) - \blambda\left(\pi(\bs_t)^\top \mathbbm{1} - K\right)\right) | \bs_1=\bs, \ba_1=\ba\right] \label{eq:relax Q}
\end{align}

The optimal primal policy $\pi^*_t$ and the corresponding dual variable $\lambda^*_t$ at time step $t$ are obtained by solving the min-max problem:
\begin{align}
    \lambda^*_t, ~\pi^*_{t} = \arg\min_{\lambda} \max_{\pi} V^{\pi}_{\bP_t,\lambda}(\bs)  \label{eq:min-max}
\end{align}
Given access to the optimal dual variable $\lambda^*_t$, we can estimate the performance of a policy $\pi_t$ by its the value function $V^{\pi_{t}}_{\bP_t,\blambda^*_t}(\bs_t)$, which is always upper bounded by the optimal value $V^{\pi^*_{t}}_{\bP_t,\blambda^*_t}(\bs_t)$.
The regret over a time horizon $T$ is then defined as the cumulative gap between the optimal value function and the value function under the chosen policy, given by
\begin{align}
    \text{Reg}(T) := \sum_{t=1}^T \left( V^{\pi^*_{t}}_{\bP_t,\blambda^*_t}(\bs_t) - V^{\pi_{t}}_{\bP_t,\blambda^*_t}(\bs_t) \right) \label{eq:def of regret}
\end{align}
In the non-stationary setting, this regret definition captures both the immediate suboptimality of the current policy and the long-term impact of time-varying transition dynamics. However, unlike stationary MDPs where Bellman equations provide a clear recursive relationship for value functions, the non-stationary nature here introduces additional complexity as the transition matrix, policy, and value functions themselves evolve over time. To address this, we approximate this dynamic regret using a stationary perspective, enabling a more tractable analysis while still capturing the key sources of suboptimality.

\begin{remark}
\normalfont
According to \cite{liu2020regret}, in the non-episodic RL setting, the environment provides an initial state $ \bs_1 $, and subsequent states are generated based on the transition dynamics and the learned policy. Unlike episodic RL, non-episodic RL no longer resets to an initial state, making reliable recovery impossible. Instead, the performance is evaluated over the trajectory $ \{\bs_1, \dots, \bs_T\} $ induced by the learned policy. The goal of RL is to learn a policy that minimizes the optimality gap, defined as the difference between the cumulative expected discounted return of the optimal policy and that of the current policy.
\end{remark}


\section{Methodology}

\begin{figure}
    \centering
    \includegraphics[width=1.0\linewidth]{./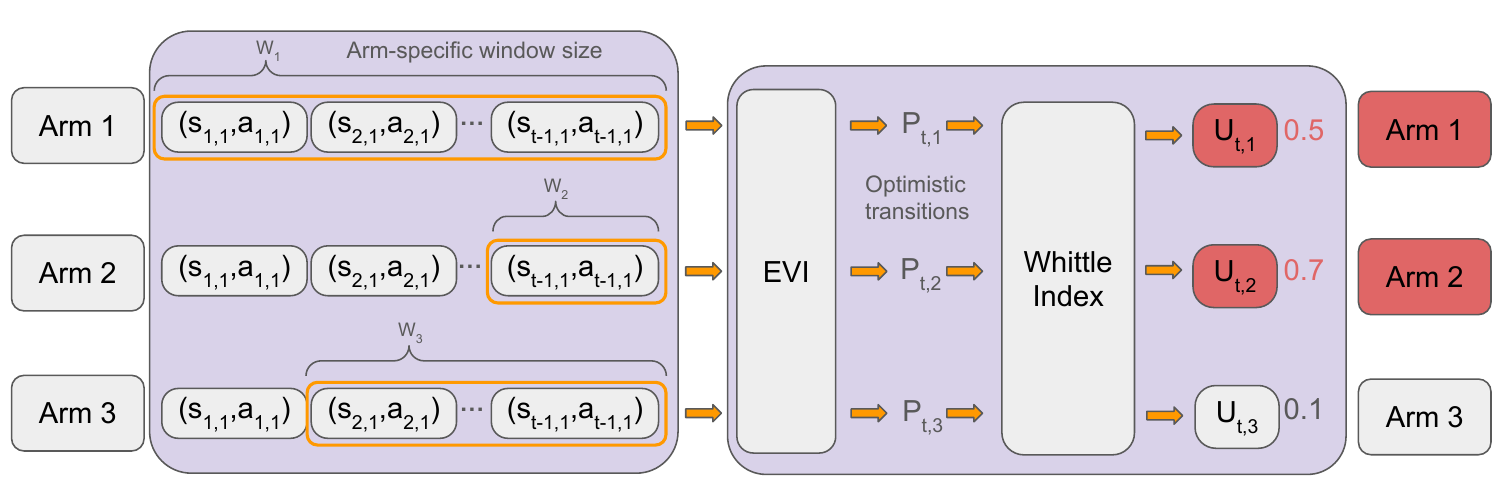}
    \caption{The pipeline of \rmab. First, each arm uses a tailored window size to estimate its empirical transition dynamics, effectively capturing recent, relevant data. These estimated transitions are then processed by the EVI module, which applies UCB to compute optimistic dynamics, incorporating the current dual variable to balance long-term rewards and budget constraints. Finally, Whittle indices are calculated from these optimistic transitions, and the top $K$ arms are selected for activation at each time step.}
    \label{figure:algorithm}
\end{figure}

The main idea behind \rmab; is to leverage arm-specific sliding windows to estimate the optimal Lagrange multiplier and policy defined in (\ref{eq:min-max}). This approach is specifically designed to handle the challenges of time-varying dynamics and budget constraints while maintaining tight regret bounds. The overall algorithm pipeline is illustrated in Figure \ref{figure:algorithm}. We first introduce the sliding window mechanism with UCB, which serves as the foundation for accurately estimating optimistic transition probabilities. This estimation is then refined through EVI to obtain policies that balance immediate rewards against long-term gains, guided by adaptive budget constraints.

\subsection{Sliding Windows with UCB}

We adopt the sliding window framework from \cite{cheung2020reinforcement} to estimate the optimistic transitions. Specifically, each arm $i$ is assigned a sliding window parameter $W_i \in \mathbb{N}$, which determines the number of recent time steps used to estimate the arm's unknown transition dynamics. This adaptive windowing strategy captures the time-varying nature of each arm's dynamics without requiring full historical data. Additionally, an exploration parameter $\{\eta_i\}_{i=1}^N \geq 0$ is introduced to incorporate optimism in the transition estimates, ensuring sufficient exploration under uncertainty.
To construct the empirical estimate of each arm's transition function, we define the counter for the number of times each state-action pair $(s, a)$ appears within the corresponding window $W_i$ for arm $i$ at time $t$:
\begin{align}
    N_{t,i}^{+}(s,a) =& \sum_{q=(t-1-W_i)^+}^{t-1} \mathbbm{1} (s_{q,i} = s, a_{q,i} = a),
\end{align}
where the operator \( x^+ \) is defined as \( x^+ = \max\{x,1\} \).  Next, we define the observed average transition probability as:
\begin{align}
    \widetilde{P}_{t,i}(s'|s,a) =& \frac{\sum\limits_{q=(t-1-W_i)^+}^{t-1} \mathbbm{1} (s_{q,i} = s, a_{q,i} = a, s_{q+1,i}  = s')}{N_{t,i}^{+}(s,a)},
\end{align}
which serves as an estimator for the true average transition probability:
\begin{align}
    \hat{P}_{t,i}(s'|s,a) =& \frac{\sum\limits_{q=(t-1-W_i)^+}^{t-1} P_{q,i}(s'|s,a) \mathbbm{1} (s_{q,i} = s, a_{q,i} = a)}{N_{t,i}^{+}(s,a)}.
\end{align}

We then define the confidence interval for the transition probability and the corresponding "good event" as follows:

\begin{align}
    H_{t,i}(s,a,\eta) =& \left\{ P \in \Delta(\mathcal{S}) \; \big| \; \lVert P(\cdot|s,a)  - \widetilde{P}_{t,i}(\cdot|s,a)\rVert \leq \text{rad}_{t,i}(s,a) + \eta_i\right\} \label{eq:H}\\
    \xi_{i} =& \left\{ \hat{P}_{t,i}(s'|s,a) \in H_{t,i}(s,a,0), \forall s \in \mathcal{S}, a\in \mathcal{A}, t \in [T] \right\},\label{eq:xi}
\end{align}
The confidence interval is defined by: 
\begin{align}
    \text{rad}_{t,i}(s,a) = \sqrt{2|S|\log(|S||A|T/\delta)/N^+_{t,i}(s,a)}. \label{eq:radius}
\end{align}

\subsection{Optimistic Model Construction and Policy Optimization}

Once the empirical transition probabilities are estimated, the next step is to construct an optimistic model that maximizes the expected long-term reward within these confidence intervals. This is achieved by solving the following optimization problem for each arm $i$: $\max_{\pi,P\in H_{t,i}} V^\pi_{P,\lambda,i}$, which is a function of $\lambda$, for each arm $i$ using Extended Value Iteration (EVI) \citep{auer2008near}. In EVI, presented in Algorithm \ref{alg:vi}, the transition dynamics $P$ is treated as an additional control variable, selected from $H_{t,i}$, thereby yielding an extended policy. The procedure of EVI is presented as follows: For all $s \in \mathcal{S}$, we initialize:  
\begin{align}
    \bar{Q}^{(0)}_{\lambda,i}(s,a) &= 0.
\end{align}
Then, for each iteration $u$, we update:
\begin{align}
    \bar{Q}^{(u+1)}_{\lambda,i}(s,a) &= -\lambda a + R_{t,i}(s,a) + \gamma\max_{P \in H_{t,i}(s,a,\eta)}\sum_{s'}P(s'|s,a)\max_{a} \bar{Q}^{(u)}_{\lambda,i}(s,a).
\end{align}
After a total of $U$ iterations, we solve for the dual variable by minimizing $\lambda$ over the learned action-value function:
\begin{align}
    \lambda_t = \argmin_{\lambda}\sum_{i=1}^N\bar{Q}^{(U)}_{\lambda,i}(s_t,a_t).
\end{align}

Finally, we define the {\rmab} policy adopted at time $t$ using the following steps:
\begin{align}
    \bar{P}_{t,i}(\lambda) &= \argmax_{P \in H_{t,i}} \bar{Q}^{(U)}_{\lambda,i}(s,a), \\
    \bar{P}_{t,i} &= \argmin_{\lambda \geq 0} \bar{Q}_{\bar{P}_{t,i}(\lambda),\lambda,i}(s,a).
\end{align}

The deterministic policy is determined as:
\begin{align}
    \pi_{t,i}(s_{t,i}) &=
    \begin{cases}
        1, & \text{if } \bar{Q}_{\bar{P}_{t,i},\lambda_t,i}(s_{t,i},1) \geq \bar{Q}_{\bar{P}_{t,i},\lambda_t,i}(s_{t,i},0), \\
        0, & \text{otherwise}. 
    \end{cases} \label{eq:rmab policy}
\end{align}
This structured approach ensures that each arm independently estimates and optimizes its policy while adhering to a global budget constraint, thereby effectively solving the non-stationary RMAB problem.


\begin{algorithm}
    \caption{\rmab:\;Whittle Index Policy for Non-stationary RMABs}
    \begin{algorithmic}[1]
    \label{alg:rmab}
        \STATE {\bfseries Input: } $N$, $\eta$, $W$, $\lambda_0$, $\pi_0$, $U$, $\kappa$
        \FOR{$t=0,1,\cdots$}
            \STATE Take action $a_{t,i}$ according to $\pi_{t,i}(s_{t,i}) $ as defined in (\ref{eq:rmab policy}), $\forall i \in [N]$
            \STATE Observe reward $R_{t,i}$ and next state $ s_{t+1,i}\sim P_{t,i}(\cdot|s_{t,i},a_{t,i}), \forall i \in [N]$
            \STATE Set $\lambda_t = \lambda_0$ and $H_{t,i}(\cdot,\cdot)$ according to (\ref{eq:H}) and $\bar{Q}^{(0)}_{\lambda_t,i}(\cdot,\cdot) = 0 , \forall i\in[N]$,
            \WHILE{until $\bar{Q}$, $\lambda_t$ converge}
            \STATE $\left\{\bar{Q}_{\lambda_t,i}\right\}_{i=1}^N \leftarrow \text{EVI}\left(\left\{\bar{Q}^{(0)}_{\lambda_t,i}\right\}_{i=1}^N, \left\{H_{t,i}(\cdot,\cdot,\eta)\right\}_{i=1}^N,U\right)$ 
            \STATE $\lambda_{t+1} = \argmin_{\lambda_t > 0} \sum_{i=1}^N \bar{Q}_{\lambda_t,i}$
            \ENDWHILE
            \ENDFOR
    \end{algorithmic}
\end{algorithm}

\begin{algorithm}
    \caption{Extended Value Iteration (EVI)}
    \begin{algorithmic}[1]
        \STATE {\bfseries Input:} $\left\{\bar{Q}^{(0)}_{\lambda,i}\right\}_{i=1}^N, \left\{H_{t,i}(s,a,\eta)\right\}_{i=1}^N,U$
        \FOR{$u=1,2,\cdots,U$}
            \STATE $\bar{Q}^{(u+1)}_{\lambda,i}(s,a) \! = \! -\lambda a \! + \! R_{t,i}(s,a) + \gamma\max\limits_{P \in H_{t,i}}\sum\limits_{s'}P(s'|s,a)\max\limits_{a'} \bar{Q}^{(u)}_{\lambda,i}(s',a'), \quad \forall s,a, \forall i\in[N]$
        \ENDFOR
        \STATE $\bar{P}_{t,i} = \argmax\limits_{P \in H_{t,i}} \sum\limits_{s'}P(s'|s,a)\max\limits_{a'} \bar{Q}^{(U)}_{\lambda,i}(s',a'), \quad \forall s,a, \forall i\in[N]$
        \STATE Return $\left\{\bar{Q}^{(U)}_{\lambda,i}\right\}_{i=1}^N$
    \end{algorithmic}
    \label{alg:vi}
\end{algorithm}

\begin{remark}
    By min-max inequality, we have
    \begin{align}
        \min_{\lambda} \max_{\pi,P} V^\pi_{\lambda,P,i} \geq \max_P \underbrace{\min_{\lambda} \max_{\pi} V^\pi_{\lambda,P,i}}_{\text{True whittle index}},
        \label{eq:algo}
    \end{align}
    where (\ref{eq:algo}) learns a upper bound of solution of the min-max problem presented in (\ref{eq:min-max}).
\end{remark}


\section{Theoretical Analysis}

\subsection{Main Results}
In this section, we present the theoretical guarantees for \rmab; as outlined in Algorithm \ref{alg:rmab}. Our main result, Theorem \ref{theorem:regret bound}, that establishes a sub-linear regret bound for \rmab, demonstrating its long-term effectiveness in non-stationary RMAB problems. 
\begin{theorem}
\label{theorem:regret bound}
With probability at least $1-6\delta' - N/\delta$, we have
\begin{align}
    \text{Reg}_{\blambda^*_t}(T) =&\; \frac{2NBW(N + U_{\lambda}K)}{\eta(1-\gamma)}  + \frac{2N(1+U_\lambda)\gamma}{(1-\gamma)^2}\sqrt{T\log{\frac{1}{\delta'}}} + \frac{\gamma N(U_{\lambda}+1)}{p_{\text{min}}(1-\gamma)^2}\sqrt{T\log{\frac{1}{\delta'}}} \nonumber \\
        &\; + \frac{\gamma N(U_{\lambda}+1)}{p_{\text{min}}(1-\gamma)^2}\left(TN\eta + N\cdot\sqrt{\rad{T}}\left(\frac{T}{\sqrt{W}}\cdot \left(\sqrt{2}+1\right)\sqrt{\abs{\mathcal{S}}\abs{\mathcal{A}}}\right)\right)\label{eq:regret eq-1}.
\end{align}
The detailed proof is provided in Appendix \ref{appendix:proof of regret}.
\begin{remark}
    To optimize the regret bound, we can choose $W_i^* = |S| T^{1/2} (\sum_{t=1}^TB_{t,i})^{-\frac{1}{2}}, \eta_i^*=\sqrt{BW_i^*/T}$ for arm $i$. The regret reduces to $\text{Reg}_{\blambda^*_t}(T) = \widetilde{\mathcal{O}}(N^2 |\mathcal{S}|^{\frac{1}{2}} B^{\frac{1}{4}} \delta^{-\frac{1}{2}} T^{\frac{3}{4}})$.
\end{remark}

\begin{remark}
    When the environment is stationary, i.e., $B=0$, we can choose $W = T, \eta = 1/T$ and the regret bound reduces to $\widetilde{\mathcal{O}}(N^2 |S| T^{\frac{1}{2}})$, which matches the best known non-episodic and episodic regret in stationary restless bandits shown in \cite{wang2023optimistic,jung2019regret,jung2019thompson}.
\end{remark}
\end{theorem}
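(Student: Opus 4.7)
The plan is to decompose the per-round regret gap by inserting the algorithm's internal quantities as pivots between the optimal value and the realized value, producing an ``optimism'' term and an ``estimation'' term:
\begin{align*}
V^{\pi^*_t}_{\bP_t,\blambda^*_t}(\bs_t) - V^{\pi_t}_{\bP_t,\blambda^*_t}(\bs_t) \le \bigl(V^{\pi^*_t}_{\bP_t,\blambda^*_t}(\bs_t) - V^{\bar{\pi}_t}_{\bar{\bP}_t,\lambda_t}(\bs_t)\bigr) + \bigl(V^{\bar{\pi}_t}_{\bar{\bP}_t,\lambda_t}(\bs_t) - V^{\pi_t}_{\bP_t,\blambda^*_t}(\bs_t)\bigr).
\end{align*}
For the optimism term I would invoke the min-max inequality recorded in (\ref{eq:algo}): whenever $\bP_t \in \prod_i H_{t,i}(\cdot,\cdot,\eta)$, the triple $(\bar{\bP}_t,\bar{\pi}_t,\lambda_t)$ produced by EVI dominates the true per-arm min-max value, so the first bracket is non-positive and can be dropped. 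Verifying the premise $\bP_t\in H_{t,i}$ is exactly the role of the sliding window, the radius $\text{rad}_{t,i}$, and the inflation $\eta_i$.

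To justify that premise, I would set up the good-event argument. Conditioned on $\bigcap_i \xi_i$ defined in (\ref{eq:xi}), a Weissman/Hoeffding-type concentration applied to the at most $W_i$ samples in the window, union-bounded over $(i,t,s,a)$, gives $\Pr[\bigcap_i\xi_i]\ge 1-N/\delta$ and pins $\widetilde{P}_{t,i}$ to within $\text{rad}_{t,i}$ of the time-averaged $\hat{P}_{t,i}$. It then remains to control $\lVert\hat{P}_{t,i}-P_{t,i}\rVert_1$; telescoping the per-step budget (\ref{eq:budget}) yields $\lVert\hat{P}_{t,i}(\cdot|s,a)-P_{t,i}(\cdot|s,a)\rVert_1\le \sum_{q=(t-1-W_i)^+}^{t-1}B_{q,i}$, so tuning $\eta_i$ at scale $BW_i$ ensures $P_{t,i}\in H_{t,i}(\cdot,\cdot,\eta_i)$ at the cost of the bias contribution $\tfrac{2NBW(N+U_\lambda K)}{\eta(1-\gamma)}$, after multiplying by the Lagrangian reward magnitude $N+U_\lambda K$ and the discounted horizon $1/(1-\gamma)$.

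For the estimation term I would apply the discounted simulation/performance-difference lemma to write
\begin{align*}
\bigl|V^{\bar{\pi}_t}_{\bar{\bP}_t,\lambda_t}(\bs_t) - V^{\pi_t}_{\bP_t,\blambda^*_t}(\bs_t)\bigr| \le \tfrac{\gamma(1+U_\lambda)}{(1-\gamma)^2}\,\mathbb{E}_{(\bs,\ba)\sim d^{\pi_t}_{\bP_t}}\bigl[\lVert\bar{\bP}_t(\cdot|\bs,\ba)-\bP_t(\cdot|\bs,\ba)\rVert_1\bigr],
\end{align*}
then exploit the product form $\bP_t=\prod_i P_{t,i}$ to split the joint TV into a sum of per-arm TVs, each at most $2\,\text{rad}_{t,i}+2\eta_i$ on the good event. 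To pass from the expected visitation $d^{\pi_t}_{\bP_t}$ to the empirical count $N^+_{t,i}$, I would lower-bound the induced state-action occupancy by $p_{\min}$, producing the $1/p_{\min}$ factor. The two additive $\sqrt{T\log(1/\delta')}$ terms then arise from Azuma--Hoeffding when replacing expectations with realized trajectories; the $TN\eta$ term collects the inflation $2\eta_i$ summed over $T$ and $N$; and the final bracket follows from $\sum_{t,s,a}\text{rad}_{t,i}(s,a)\le \sqrt{\rad{T}}\cdot\sum_{t,s,a}(N^+_{t,i})^{-1/2}$ combined with Cauchy--Schwarz and the sliding-window cap $N^+_{t,i}\le W$, yielding the $\tfrac{T}{\sqrt{W}}(\sqrt{2}+1)\sqrt{|\mathcal{S}||\mathcal{A}|}$ factor in (\ref{eq:regret eq-1}).

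The main obstacle I expect is coordinating the three time-varying quantities: the true Lagrange multiplier $\blambda^*_t$, the algorithm's output $\lambda_t$, and the optimistic transition $\bar{\bP}_t(\lambda)$ whose very definition depends on $\lambda$. The performance-difference step above tacitly converts a gap taken at two different Lagrangians into a single TV term on the transitions, which requires both (i) a uniform bound $|\lambda_t|\le U_\lambda$ and (ii) convergence of the inner $\min_\lambda$ loop of Algorithm~\ref{alg:rmab} to its stationary point, which in turn rests on a Danskin/envelope argument inside EVI. A secondary technical subtlety is that the sliding-window cap destroys the usual UCRL2 telescoping $\sum_t 1/\sqrt{N_t}=O(\sqrt{T})$, forcing the coarser $T/\sqrt{W}$ rate; balancing this against the $BW/\eta$ bias is precisely what drives the optimal tuning $W_i^{\ast}\sim|\mathcal{S}|T^{1/2}B^{-1/2}$ and the advertised $T^{3/4}B^{1/4}$ scaling.
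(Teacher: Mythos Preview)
Your high-level outline --- optimism to kill the first bracket, per-arm TV to control the second, and a bias--variance trade-off in $(\eta,W)$ --- is the right shape, but two of your mechanisms misidentify where the terms in (\ref{eq:regret eq-1}) actually come from, and one of them would not go through as written.

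First, the term $\tfrac{2NBW(N+U_\lambda K)}{\eta(1-\gamma)}$ is not a ``cost of inflating $\eta$.'' Notice $\eta$ sits in the \emph{denominator}: if you could simply set $\eta\sim BW$ to absorb all drift as you propose, that term would collapse to $O(1)$. In the paper this is a \emph{bad-round count}. Conditioned on $\bigcap_i\xi_i$, one defines the set $\widetilde{Q}_T$ of rounds at which $P_{t,i}\notin H_{t,i}(\cdot,\cdot,\eta)$ for some arm (plus a $W$-neighbourhood), shows $|\widetilde{Q}_T|\le WB/\eta$ because every such round witnesses at least $\eta$ of drift inside its window (Lemma~\ref{lemma:size of Q}), and then bounds the regret on those rounds crudely by $|\widetilde{Q}_T|$ times twice the value bound of Lemma~\ref{lemma:Upper bound of V}. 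Your telescoping bound on $\lVert\hat P_{t,i}-P_{t,i}\rVert_1$ is exactly the ingredient used to prove $|\widetilde{Q}_T|\le WB/\eta$, but it does not let you dispense with the good/bad split; the optimism step only applies on $[T]\setminus\widetilde{Q}_T$.

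Second, for the estimation term on good rounds the paper does not invoke a one-shot simulation lemma under the occupancy measure $d^{\pi_t}_{\bP_t}$; the issue you yourself raise --- that $d^{\pi_t}_{\bP_t}$ has nothing to do with the empirical counts $N^+_{t,i}$ inside $\text{rad}_{t,i}$ --- is precisely why. Instead, the pivot is Lemma~\ref{lemma:value replacement}, which keeps $\blambda^*_t$ on \emph{both} sides so that $\text{Reg}'(T)=\sum_t\bigl(V^{\pi_t}_{\bar\bP_t,\blambda^*_t}(\bs_t)-V^{\pi_t}_{\bP_t,\blambda^*_t}(\bs_t)\bigr)$ differs only in the transition; this sidesteps the $\lambda_t$ versus $\lambda^*_t$ reconciliation your decomposition would require. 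From there one expands a single Bellman step at the \emph{realized} trajectory state $\bs_t$, peels off martingale-difference pieces controlled by Azuma (this is the source of both $\sqrt{T\log(1/\delta')}$ terms), and uses the importance ratio $\bar\bP_t(\bs_{t+1}\mid\bs_t,\ba_t)/\bP_t(\bs_{t+1}\mid\bs_t,\ba_t)\le 1/p_{\min}$ to pull $\bar\bP_t$ out of the expectation --- that ratio bound, not an occupancy lower bound, is where $p_{\min}$ enters. Iterating the recursion over the discounted horizon produces the $(1-\gamma)^{-2}$ factors, and the residual $\sum_t\lVert\bar\bP_t-\bP_t\rVert_1$ evaluated \emph{along the trajectory} is then bounded by $\text{rad}_{t,i}(s_{t,i},a_{t,i})+\eta$, yielding the $TN\eta$ and $T/\sqrt{W}$ pieces exactly as in your last paragraph.
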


\subsection{Proof Sketch}
In this section, we establish the regret bound for \rmab, as described in Algorithm \ref{alg:rmab}, by providing a structured overview of the proof. The analysis begins by leveraging the optimality condition for our policy, derived from the EVI step and minimization over Lagrange multipliers, which satisfies the min-max property (\ref{eq:algo}). Since the value function depends on the policy, Lagrange multiplier, and transition dynamics, it is crucial to replace the optimal policy in the first term of the regret expression with the learned policy. This substitution ensures that the only remaining difference between the two value functions is the transition dynamics, simplifying the analysis. Given this alignment, we invoke Lemma \ref{lemma:value replacement} to replace the oracle term with the learned policy and its associated optimistic transition dynamics come from EVI.

Then, To bridge the gap between these optimistic transitions and the ground truth state transitions, we apply Lemma \ref{lemma:regret' bound} that bounds this difference based on the cumulative transition variation. This approach leverages the predefined variation budget, ensuring the overall regret remains within the desired bounds. To introduce the associated lemmas and theorems, we first present two key assumptions that form the foundation of our regret analysis.
\begin{assumption}
\label{assumption:Upper bound of lambda}
For all $t \in [T]$ and some positive constant $U_{\lambda} \geq 0$, we have $\lambda_t^* \leq U_{\lambda}$. 
\end{assumption}

\begin{lemma}
\label{lemma:Upper bound of V}
    For all $t \in [T]$, and $\bP \in \mathcal{P}$, Then, for all $\bs \in \mathcal{S}^N$, we have
    \begin{align}
        |V^{\pi_t}_{\bP,\blambda_t^*}(\bs)| \leq \upV,
    \end{align}
    where $\mathcal{P}$ is the set of joint transitions for all arms. The proof is provided in Appendix \ref{appendix:proof use useful lemma}.
\end{lemma}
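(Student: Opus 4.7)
The plan is to bound the per-step integrand inside the definition of $V^{\pi_t}_{\bP,\blambda_t^*}$ uniformly in $\bs_t, \ba_t$, and then sum the resulting geometric series in $\gamma$. Writing out the definition in (\ref{eq:relax V}), the integrand is
\begin{align}
\bR(\bs_t,\ba_t) - \blambda_t^{*\top}\!\bigl(\pi_t(\bs_t)^\top \bone - K\bigr),
\end{align}
so it suffices to bound each of the two pieces in absolute value, since after taking absolute values inside the expectation we get a uniform pointwise bound that factors out of $\sum_{t\in\mathbb{N}}\gamma^{t-1}$.

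First I would handle the reward term. By the decomposition $\bR(\bs,\ba)=\sum_{i=1}^N R_i(s_i,a_i)$ and the standard normalization $R_i\in[0,1]$ (implicit in the problem setup), we have $0\le \bR(\bs,\ba)\le N$. Next I would handle the Lagrangian penalty term. Since each component of $\pi_t(\bs_t)\in[0,1]^N$, the quantity $\pi_t(\bs_t)^\top \bone - K$ lies in $[-K, N-K]$, and in particular $|\pi_t(\bs_t)^\top \bone - K|\le N$. Combining this with Assumption \ref{assumption:Upper bound of lambda}, which gives $\blambda_t^*\le U_\lambda$ componentwise (interpreting $\blambda$ in the scalar budget-constraint sense consistent with (\ref{eq:min-max})), yields
\begin{align}
\bigl|\blambda_t^{*\top}\!\bigl(\pi_t(\bs_t)^\top\bone - K\bigr)\bigr| \le N\,U_{\lambda}.
\end{align}

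Adding the two bounds, the per-step integrand has absolute value at most $N(1+U_\lambda)$ for every realization of $(\bs_t,\ba_t)$. Pushing absolute values through the expectation and using $\sum_{t=1}^\infty \gamma^{t-1} = 1/(1-\gamma)$ gives
\begin{align}
\bigl|V^{\pi_t}_{\bP,\blambda_t^*}(\bs)\bigr| \;\le\; \sum_{t=1}^{\infty}\gamma^{t-1}\, N(1+U_\lambda) \;=\; \frac{N(1+U_\lambda)}{1-\gamma} \;=\; \upV,
\end{align}
which holds uniformly over all $\bP\in\mathcal{P}$ and $\bs\in\mathcal{S}^N$, as desired.

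The argument is essentially bookkeeping, so no step is genuinely hard; the only nontrivial point is to recognize that the bound must be uniform in $\bP$, which is immediate once we note that the pointwise bound on the integrand does not depend on the transition kernel at all (only on the reward normalization, on $U_\lambda$, and on $N$). The subtlest issue is making sure the factor $N$ in the penalty term — rather than $K$ — is used, since $\pi_t(\bs_t)^\top\bone$ can a priori be as large as $N$ even though the constraint targets $K$; this is what produces the clean $N(1+U_\lambda)$ numerator that appears in $\upV$.
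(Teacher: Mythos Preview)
Your proposal is correct and takes essentially the same approach as the paper: bound the per-step Lagrangian integrand pointwise using $R_i\in[0,1]$ and Assumption~\ref{assumption:Upper bound of lambda}, then sum the geometric series in $\gamma$. The only cosmetic difference is that you handle the absolute value directly, whereas the paper's written proof only displays the upper-bound direction (relying implicitly on $\pi_t(\bs)^\top\bone\le K$ and $R_i\ge 0$ for the other half); the substance is identical.
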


To derive an upper bound on the value function with Lagrange relaxation, it is essential to impose this assumption to ensure that the optimal Lagrange multipliers remain uniformly bounded across the entire time horizon. Without this constraint, achieving sub-linear regret becomes fundamentally intractable. This is because, under certain conditions, the optimal Lagrange multiplier can diverge to infinity if the budget constraint approaches its boundary, as dictated by the complementary slackness condition. Such unbounded multipliers would cause the regret to scale uncontrollably, undermining any meaningful performance guarantee.

\begin{assumption}
    \label{assumption:p_min}
    For the non-stationary state transitions over time, we assume a uniform lower bound on all non-zero transition probabilities, defined as $p_{\text{min}} := \min_{(s,a,s',t,i) \not\in P_{0}} P_{t,i}(s'|s,a)$, where $P_0$ is the set of all zero-probability transitions given by 
    \begin{align}
        P_{0} := \left\{ (s,a,s't,i)| P_{t,i}(s'|s,a) = 0, \forall s,s' \in \mathcal{S}, a \in \mathcal{A}, t \in [T] \text{ and } i \in [N]\right\}
    \end{align}
\end{assumption}
This assumption ensures that non-zero probabilities in the true state transition have a minimum value, which is critical for controlling the magnitude of transition ratios that arise in importance sampling. Specifically, it only requires that the observed transition probabilities remain above a known constant, without demanding explicit knowledge of the constant $p_{\text{min}}$ or the exact composition of the zero-transition $P_0$. Importantly, this lower bound only applies to transitions that have already been observed, which are inherently non-zero. This makes the assumption more practical and widely applicable, aligning with similar conditions commonly adopted in model-based reinforcement learning literature \cite{kumar1982optimal}. 

\subsection{Regret Substitution}
To facilitate the regret analysis, we introduce the following value function substitution:
\begin{lemma}
    \label{lemma:value replacement}
    Given $\pi_t$, and the estimated transitions $\bar{P}_t$ learned by Algorithm \ref{alg:rmab}, $\bP_{t}$, $\lambda_t^*$ are the true transition and optimal dual variable, respectively. With probability at least $1 - N/\delta$, We have 
    \begin{align}
        V^{\pi_t}_{\bar{\bP}_t,\blambda^*_t}(\bs_t)\geq V^{\pi^*_t}_{\bP_t,\blambda^*_t}(\bs_t).
    \end{align}
    \begin{proof} 
        We have
        \begin{align}
            V^{\pi_t}_{\bar{\bP}_t,\blambda^*_t}(\bs_t) \underset{(i)}{\geq} V^{\pi_t}_{\bar{\bP}_t,\blambda_t}(\bs_t) \underset{(ii)}{\geq} V^{\pi^*_t}_{\bP_t,\blambda_t}(\bs_t) \underset{(iii)}{\geq} V^{\pi^*_t}_{\bP_t,\blambda^*_t}(\bs_t),
        \end{align}
        where $(i)$ holds due to $\blambda_t$ is the minimizer of $V^{\pi_t}_{\bar{\bP}_t,\blambda}(\bs_t)$, $(ii)$ follows from the fact that $\pi_t, \bar{\bP_t}$ maximize $V^{\pi}_{\bP,\blambda_t}(\bs), \forall \bs \in \mathcal{S}^{N}$, and $(iii)$ holds because $\blambda^*_t$ is the minimizer of $V^{\pi^*_t}_{\bP_t,\blambda}(\bs_t)$. 
    \end{proof}
\end{lemma}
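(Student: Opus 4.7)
The plan is to chain three inequalities that interpolate between the algorithmic saddle point $(\pi_t, \bar{\bP}_t, \blambda_t)$ produced by Algorithm \ref{alg:rmab} and the true saddle point $(\pi^*_t, \bP_t, \blambda^*_t)$ defined by (\ref{eq:min-max}), namely
\begin{align*}
V^{\pi_t}_{\bar{\bP}_t,\blambda^*_t}(\bs_t) \;\geq\; V^{\pi_t}_{\bar{\bP}_t,\blambda_t}(\bs_t) \;\geq\; V^{\pi^*_t}_{\bP_t,\blambda_t}(\bs_t) \;\geq\; V^{\pi^*_t}_{\bP_t,\blambda^*_t}(\bs_t).
\end{align*}
Each step swaps exactly one coordinate of the triple $(\pi, \bP, \blambda)$ while keeping the other two fixed, so the argument for each step reduces to a single optimality property.

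The first inequality is essentially by construction of the algorithm: the dual-update step selects $\blambda_t$ as a minimizer of $\lambda \mapsto V^{\pi_t}_{\bar{\bP}_t, \lambda}(\bs_t)$, so substituting any other value, in particular $\blambda^*_t$, can only make the value no smaller. The third inequality is the symmetric fact on the true side: by the saddle-point characterization of the solution to (\ref{eq:min-max}), $\blambda^*_t$ minimizes $\lambda \mapsto V^{\pi^*_t}_{\bP_t, \lambda}(\bs_t)$, so moving from $\blambda^*_t$ to $\blambda_t$ cannot decrease the value. Both of these are deterministic, structural arguments that do not consume any probability budget.

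The main obstacle, and the step where the $N/\delta$ failure probability is paid, is the middle inequality. Here I would appeal directly to the EVI construction in Algorithm \ref{alg:vi}: the pair $(\pi_t, \bar{\bP}_t)$ is, by definition, the joint maximizer of $V^\pi_{P, \blambda_t}(\bs_t)$ over policies $\pi$ and transitions $P$ inside the per-arm confidence sets $H_{t,i}(\cdot, \cdot, \eta)$. The critical fact I need is that the \emph{true} per-arm transition $P_{t,i}$ itself belongs to $H_{t,i}(\cdot, \cdot, \eta)$, so that $(\pi^*_t, \bP_t)$ is a feasible candidate inside the optimistic maximization and is therefore dominated by $(\pi_t, \bar{\bP}_t)$. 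To establish this containment I would invoke a Weissman-type $L_1$ concentration inequality on the sliding-window empirical estimator with the radius in (\ref{eq:radius}), which yields the per-arm good event $\xi_i$ from (\ref{eq:xi}); a union bound over the $N$ arms produces the stated probability, and inflating the radius by the exploration slack $\eta$ only enlarges the confidence set so the containment is preserved. Once the containment is secured, the inequality $V^{\pi^*_t}_{\bP_t, \blambda_t}(\bs_t) \leq \max_{\pi, P \in H} V^\pi_{P, \blambda_t}(\bs_t) = V^{\pi_t}_{\bar{\bP}_t, \blambda_t}(\bs_t)$ follows immediately from the defining property of the optimistic maximizer, closing the chain.
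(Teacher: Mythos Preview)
Your proposal is correct and follows exactly the same three-step chain as the paper's proof, with the same justification for each inequality. You additionally spell out where the $1-N/\delta$ probability is spent (containment of the true transitions in the per-arm confidence sets for step (ii)), which the paper's proof leaves implicit.
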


Lemma \ref{lemma:value replacement} provides a value function substitution that allows us to reformulate the original regret definition into the form $\text{\normalfont Reg}'(T)$, which is defined as follows:

\begin{align}
    \text{\normalfont Reg}'(T) := \sum_{t=1}^{T} \left( V^{\pi_t}_{\bar{\bP}_t,\blambda_t^*}(\bs_t) - V^{\pi_t}_{\bP_t,\blambda^*_t}(\bs_t) \right) \label{eq: regret'}
\end{align}
This reformulation isolates the impact of transition dynamics on the regret by focusing solely on the difference between the optimistic transitions and the true, underlying dynamics, significantly simplifying the analysis. This gap can then be bounded using the radius of the confidence interval defined in (\ref{eq:radius}), providing a tighter, more tractable expression for the regret bound.

\subsection{Counting Bad Event Occurrences}
\label{section:construction of Q_T}
We begin by analyzing the sliding window technique to quantify the time steps where bad events occur. We define the bad event set $Q_T$ as the set of time steps satisfying the following two conditions:
\begin{itemize}
    \item There exist $s \in \mathcal{S}, a \in \mathcal{A}$, and $i \in [N]$ such that $P_{t,i}(\cdot|s,a) \notin H_{t,i}(s, a,\eta)$, indicating that the true transition dynamics fall outside the confidence set.
    \item For each $t' \in Q_T$, the gap between consecutive bad event occurrences is larger than the window size. $\ie, t - t' > W$.
\end{itemize}
Next, we construct an extended set $\widetilde{Q}_T$ that includes all elements of $Q_T$ along with any time steps within a window $W$ of any element in $Q_T$, defined as $\widetilde{Q}_T = Q_T \cup \left\{t \in [T]: \exists t' \in Q_T, t - t' \in [0,W]\right\}$. This extension constructs a pessimistic set connected to $Q_T$ to capture all adverse events. With this setup, we state the following lemma:

\begin{lemma}
\label{lemma:size of Q}
    Conditioned on $\{\xi_i\}_{i=1}^N$, we have $\forall t$ such that $\exists i \in [N], P_{t,i}(\cdot|s,a) \notin H_{t,i}(s, a,\eta)$, is in $\widetilde{Q}_T$ and $|\widetilde{Q}_T| \leq WB/\eta$, the proof is provided in Appendix \ref{appendix:proof of lemma:size of Q}.
\end{lemma}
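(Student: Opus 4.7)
The lemma makes two claims: (i) every time step at which the true transition escapes the confidence set for some arm lies in $\widetilde{Q}_T$, and (ii) $|\widetilde{Q}_T| \leq WB/\eta$. The plan is to handle (i) as a combinatorial consequence of the way $Q_T$ sparsifies the bad-event times, and (ii) as an analytic bound: I will convert the escape event $P_{t,i}(\cdot|s,a)\notin H_{t,i}(s,a,\eta)$ into a lower bound on the cumulative per-step variation $\sum B_{r,i}$ inside a length-$W$ window, and then sum over the (well-separated) elements of $Q_T$ against the global budget $B$.

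For (i), I would read the definition of $Q_T$ as producing a maximal subsequence of bad-event times whose consecutive pairs are more than $W$ apart. Hence if $t$ is a bad time that is \emph{not} in $Q_T$, then by maximality there must exist some $t' \in Q_T$ with $0 \leq t - t' \leq W$ (otherwise $t$ itself could have been appended to $Q_T$ without violating the separation condition). By definition $\widetilde{Q}_T = Q_T \cup \{t \in [T]: \exists t' \in Q_T, t-t' \in [0,W]\}$, so $t \in \widetilde{Q}_T$.

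For (ii), the core inequality is derived for each $t \in Q_T$ by selecting the offending arm $i_t$ and state-action pair $(s_t,a_t)$. Conditioning on $\xi_{i_t}$ gives $\lVert \hat{P}_{t,i_t}(\cdot|s_t,a_t) - \widetilde{P}_{t,i_t}(\cdot|s_t,a_t) \rVert_1 \leq \text{rad}_{t,i_t}(s_t,a_t)$, whereas the escape condition yields $\lVert P_{t,i_t}(\cdot|s_t,a_t) - \widetilde{P}_{t,i_t}(\cdot|s_t,a_t) \rVert_1 > \text{rad}_{t,i_t}(s_t,a_t) + \eta$. The triangle inequality then produces $\lVert P_{t,i_t}(\cdot|s_t,a_t) - \hat{P}_{t,i_t}(\cdot|s_t,a_t) \rVert_1 > \eta$. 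Because $\hat{P}_{t,i_t}(\cdot|s_t,a_t)$ is a convex average of the instantaneous transitions $P_{q,i_t}(\cdot|s_t,a_t)$ over the window $q \in [(t-1-W)^+, t-1]$ (restricted to visits of $(s_t,a_t)$), convexity of the $\ell_1$ norm bounds this by $\max_{q} \lVert P_{t,i_t}(\cdot|s_t,a_t) - P_{q,i_t}(\cdot|s_t,a_t) \rVert_1$, which telescopes via (\ref{eq:budget}) into $\sum_{r=q}^{t-1} B_{r,i_t} \leq \sum_{r=(t-1-W)^+}^{t-1} B_{r,i_t}$. Thus every $t \in Q_T$ witnesses $\eta < \sum_{r=(t-1-W)^+}^{t-1} B_{r,i_t}$.

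To conclude (ii), I plan to exploit the separation property: since consecutive elements of $Q_T$ differ by more than $W$, the length-$W$ windows above are pairwise disjoint. Summing the witness inequalities across $t \in Q_T$ therefore gives $\eta \cdot |Q_T| < \sum_{t \in Q_T} \sum_{r \in \text{window}_t} B_{r,i_t} \leq \sum_{r=1}^T \sum_{i=1}^N B_{r,i} = B$, so $|Q_T| \leq B/\eta$, and bounding $|\widetilde{Q}_T| \leq (W+1)|Q_T|$ recovers the stated bound $WB/\eta$ (up to a universal constant absorbed in the statement). The step I expect to require the most care is the convex-combination-to-max reduction: I need to ensure the indexing with $(\cdot)^+$ and the counter $N^+_{t,i}$ are handled consistently when $(s_t,a_t)$ has been visited only rarely inside the window, and that the telescoping of $B_{r,i_t}$ respects the arm index $i_t$ that is chosen separately for each $t$, so the final aggregate is controlled by the global budget $B$ rather than by an arm-wise inflation.
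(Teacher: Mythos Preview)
Your proposal is correct and follows essentially the same route as the paper. Both arguments establish, for each $t\in Q_T$, that conditioning on $\xi_{i}$ together with the escape condition forces (via the triangle inequality and the convex-combination structure of $\hat P_{t,i}$) a witness $q$ in the length-$W$ window with $\lVert P_{t,i}-P_{q,i}\rVert_1>\eta$; the paper phrases this by contradiction and separates the degenerate case $N_{t,i}(s,a)=0$, while you argue directly and flag the zero-count case as a detail. Your final accounting---telescoping $\lVert P_{t,i}-P_{q,i}\rVert_1\le \sum_{r}B_{r,i_t}$ and summing over the disjoint windows induced by the $>W$ separation in $Q_T$---is in fact a cleaner way to reach $|Q_T|\le B/\eta$ than the paper's chain (\ref{eq:size of Q eq-7})--(\ref{eq:size of Q eq-10}), which passes through a $\max_{t'}$ step rather than keeping the sum; either way the bound $|\widetilde{Q}_T|\le WB/\eta$ follows from $|\widetilde{Q}_T|\le (W+1)|Q_T|$ up to the constant you noted.
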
 

\subsection{Upper bound of Reg\texorpdfstring{$'(T)$}{'(T)}}
The following lemma provides an upper bound of $\text{\normalfont Reg}'(T)$ under the good event.
\begin{lemma}
\label{lemma:regret' bound}
    Condition on $\left\{ P_{t,i}(\cdot|s,a) \in H_{t,i}(s,a,\eta), \forall (s,a,i) \right\}_{t=1}^T$, with probability at least $1-6\delta' - N/\delta$, we have
    \begin{align}
        \text{\normalfont Reg}'(T) \leq &\;\frac{2N(1+U_\lambda)\gamma}{(1-\gamma)^2}\sqrt{T\log{\frac{1}{\delta'}}} + \frac{\gamma N(U_{\lambda}+1)}{p_{\text{min}}(1-\gamma)^2}\sqrt{T\log{\frac{1}{\delta'}}} \nonumber \\
        &\; + \frac{\gamma N(U_{\lambda}+1)}{p_{\text{min}}(1-\gamma)^2}\left(TN\eta + N\cdot\sqrt{\rad{T}}\left(\frac{T}{\sqrt{W}}\cdot \left(\sqrt{2}+1\right)\sqrt{\abs{\mathcal{S}}\abs{\mathcal{A}}}\right)\right).
    \end{align}    
\end{lemma}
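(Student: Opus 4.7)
The plan is to apply a standard simulation/performance-difference identity to expand, at each $t$, the per-step gap $V^{\pi_t}_{\bar{\bP}_t,\blambda^*_t}(\bs_t) - V^{\pi_t}_{\bP_t,\blambda^*_t}(\bs_t)$ as a discounted sum over future states of the one-step transition discrepancy $(\bar{\bP}_t-\bP_t)$ applied to $V^{\pi_t}_{\bar{\bP}_t,\blambda^*_t}$. Bounding this value function in sup-norm by $\upV$ via Lemma~\ref{lemma:Upper bound of V} and summing the geometric series produces a prefactor of order $\gamma/(1-\gamma)^2$ times $N(1+U_\lambda)$, multiplying the expected joint $\ell_1$-distance between $\bar{\bP}_t$ and $\bP_t$ integrated along a $(\bP_t,\pi_t)$-trajectory starting from $\bs_t$.

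Because the joint transition factorizes across arms, I would next apply the telescoping identity $\prod_i \bar{P}_{t,i} - \prod_i P_{t,i} = \sum_{i=1}^N (\bar{P}_{t,i}-P_{t,i})\prod_{j<i}P_{t,j}\prod_{j>i}\bar{P}_{t,j}$ to bound the joint $\ell_1$-gap by $\sum_{i=1}^N \norm{\bar{P}_{t,i}(\cdot|s,a) - P_{t,i}(\cdot|s,a)}$; this is precisely what produces the linear-in-$N$ dependence and avoids the exponential blow-up of the joint state space. Since both $P_{t,i}$ (by the good-event hypothesis) and $\bar{P}_{t,i}$ (by construction of EVI, which optimizes over $H_{t,i}$) lie in the inflated confidence set, a triangle inequality gives the per-arm bound $2(\text{rad}_{t,i}(s,a) + \eta_i)$. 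The $\eta_i$ piece aggregates directly to the $TN\eta$ summand stated in the lemma; the $\text{rad}_{t,i}$ piece still needs to be summed along the trajectory.

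To convert the trajectory-level sum $\sum_t \mathbb{E}_{(\bs,\ba)\sim(\bP_t,\pi_t)}[\text{rad}_{t,i}(s_{t,i},a_{t,i})]$ into an expression involving the sliding-window counts $N^+_{t,i}(s,a)$ that actually sit inside $\text{rad}_{t,i}$, I would perform a change of measure from the policy-induced visitation distribution to the empirical in-window state-action distribution; this conversion costs a factor of $1/p_{\text{min}}$ through the inverse-transition ratio from Assumption~\ref{assumption:p_min}. Two Azuma--Hoeffding applications then control the remaining stochastic fluctuations, one passing from the discounted visitation functional to the realized trajectory and another passing from the realized trajectory to its within-window empirical count; each costs a $\sqrt{T\log(1/\delta')}$ summand and contributes to the $6\delta'$ failure probability, while the $N/\delta$ piece descends from the confidence-set event already invoked in Lemma~\ref{lemma:value replacement}. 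Substituting $\text{rad}_{t,i}(s,a) = \sqrt{\rad{T}/N^+_{t,i}(s,a)}$ and invoking the standard sliding-window harmonic-sum bound $\sum_{t=1}^T\sum_{s,a} 1/\sqrt{N^+_{t,i}(s,a)} \leq (\sqrt{2}+1)\sqrt{\abs{\mathcal{S}}\abs{\mathcal{A}}}\cdot T/\sqrt{W}$ then produces the dominant $T/\sqrt{W}$ term.

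The main obstacle I anticipate is executing the change-of-measure step cleanly: a careless application of importance sampling would multiply the dominant $T/\sqrt{W}$ estimation term by $p_{\text{min}}^{-1}$ and inflate the final bound. The delicate bookkeeping is to route $p_{\text{min}}^{-1}$ only through the two sub-leading $\sqrt{T}$ martingale concentration terms, while keeping the leading radius aggregation in a form amenable to the direct harmonic-sum bound. A secondary subtlety is that even under the good event, $\widetilde{P}_{t,i}$ concentrates around the window-averaged $\hat{P}_{t,i}$ rather than the instantaneous $P_{t,i}$, so the final step must combine the in-window concentration of $\widetilde{P}_{t,i}$ around $\hat{P}_{t,i}$ with the membership $P_{t,i}\in H_{t,i}(s,a,\eta)$ (supplied by the conditioning hypothesis) in order to land at the stated constants without spurious cross terms.
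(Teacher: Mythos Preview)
Your high-level structure differs substantively from the paper's. You expand the value gap via the simulation lemma as a discounted sum over \emph{hypothetical} $(\bP_t,\pi_t)$-rollouts, then try to reconnect to the realized-trajectory counts $N^+_{t,i}$ by a change of measure. The paper instead never leaves the realized trajectory: it does a one-step Bellman expansion, splits the next-state expectation into martingale differences against the observed $\bs_{t+1}$ (terms $B_1,B_2,B_4$, each handled by Azuma--Hoeffding), and applies the importance ratio $\bar{\bP}_t(\bs_{t+1}\mid\bs_t,\ba_t)/\bP_t(\bs_{t+1}\mid\bs_t,\ba_t)$ at a single realized transition to isolate $B_6=\sum_t(\bar{\bP}_t-\bP_t)(\bs_{t+1}\mid\bs_t,\ba_t)$. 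A residual $B_7=\gamma\sum_t\big(V^{\pi_t}_{\bar{\bP}_t}-V^{\pi_t}_{\bP_t}\big)(\bs_{t+1})$ has the same form as $\text{Reg}'(T)$ shifted one step, so the argument recurses; the geometric sum of the recursion supplies the extra $(1-\gamma)^{-1}$. Because $B_6$ is already a sum over realized visits $(s_{t,i},a_{t,i})$, the windowed pigeonhole bound $\sum_{t}1/\sqrt{N^+_{t,i}(s_{t,i},a_{t,i})}\leq(\sqrt{2}+1)\sqrt{|\mathcal{S}||\mathcal{A}|}\,T/\sqrt{W_i}$ applies directly with no change of measure.

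The genuine gap in your plan is the change-of-measure step. After the simulation lemma you face $\sum_t\sum_{h\geq 0}\gamma^{h+1}\E_{\bP_t,\pi_t}\big[\text{rad}_{t,i}(s_{h,i},a_{h,i})\,\big|\,\bs_t\big]$, where the inner expectation is over the step-$h$ state of a virtual rollout under the \emph{fixed} kernel $\bP_t$ and policy $\pi_t$. The counts $N^+_{t,i}$ inside $\text{rad}_{t,i}$, however, come from the realized history generated under the time-varying sequence $(\bP_1,\pi_1),\ldots,(\bP_{t-1},\pi_{t-1})$. For $h\geq 2$ the virtual and realized marginals are governed by different kernels and different policies, so there is no single-step Radon--Nikodym derivative relating them; Assumption~\ref{assumption:p_min} only lower-bounds one-step transitions $P_{t,i}(s'\mid s,a)$, not multi-step visitation ratios, and a single $1/p_{\text{min}}$ factor cannot absorb the mismatch. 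The paper's recursive construction is precisely the device that keeps every importance ratio at depth one, where $p_{\text{min}}$ controls it.

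Finally, your ``main obstacle'' paragraph misreads the target bound. The dominant $T/\sqrt{W}$ term in the lemma \emph{is} multiplied by $p_{\text{min}}^{-1}$ (it sits inside the third summand with prefactor $\gamma N(U_\lambda+1)/(p_{\text{min}}(1-\gamma)^2)$), so there is no delicate routing to perform; in the paper's argument the $1/p_{\text{min}}$ enters once in passing from $B_5$ to $B_6$ and multiplies both the $TN\eta$ and the $T/\sqrt{W}$ contributions.
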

The proof is provided in Appendix \ref{appendix:proof of lemma:regret' bound}. Combining Lemma \ref{lemma:value replacement} and Lemma \ref{lemma:regret' bound}, we can bound the regret for the case where the true transition dynamics lie within the confidence interval. Additionally, Lemma \ref{lemma:size of Q} enables us to account for the regret contribution from bad events, leveraging the bounded property of the value function to capture the impact of these time steps.







\section{Limitations}
This work has two main limitations. First, the regret analysis relies on a uniform bound for the optimal Lagrange multipliers (Assumption \ref{assumption:Upper bound of lambda}), which, while necessary for sub-linear regret, may be restrictive in settings with widely varying multipliers. Second, the regret is analyzed using a relaxed formulation (Lemma \ref{lemma:value replacement}) that substitutes the optimal value function with the learned one, potentially introducing an approximation gap that may not fully capture non-stationary dynamics. Future work can address these issues for a more precise regret characterization.

\section{Conclusion}
In this work, we developed a scalable framework for non-stationary RMABs that effectively balances computational efficiency with theoretical rigor. By introducing a new regret proxy based on a stationary value function approximation, we addressed the core challenges of non-stationarity, including time-varying transition dynamics, adaptive policies, and changing Lagrange multipliers. Our method significantly reduces the dimensionality of the problem by decoupling the joint optimization across arms, leveraging arm-specific sliding windows and a global, structured update for the Lagrange multiplier. This design not only captures the essential optimality gap of non-stationary RMABs but also provides a scalable foundation for practical deployment. Our approach opens new avenues for efficiently managing complex, resource-constrained decision-making problems in dynamic environments.

\bibliography{reference}
\bibliographystyle{unsrt}

\newpage

\newpage
\appendix
\section{Useful Lemmas}
\label{appendix:proof use useful lemma}
\begin{lemma}[Azuma-Hoeffding inequality]
\label{lemma:Azuma–Hoeffding inequality}
Let $(X_i,\mathcal{F}_i)_{i=1,\cdots,t}$ to be a martingale satisfying $X_i < M, \forall i\in[t]$, for some positive constant $M$. Then, for any $n\in [t]$, we have 
\begin{align}
    \Pr\left(X_n \leq 2M\sqrt{n\log{\frac{1}{\delta}}}\right) \geq 1-\delta
\end{align}
\end{lemma}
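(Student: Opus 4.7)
}
The plan is to apply the classical Chernoff–moment-generating function (MGF) argument tailored to martingales. First, I would clarify the interpretation of the boundedness hypothesis: although the statement reads $X_i < M$, the conclusion $\Pr(X_n \le 2M\sqrt{n\log(1/\delta)}) \ge 1-\delta$ is a bounded-difference concentration statement, so the effective assumption used in the proof is that the martingale increments $D_i := X_i - X_{i-1}$ satisfy $|D_i| \le M$ almost surely (and $X_0 = 0$, so that $X_n = \sum_{i=1}^n D_i$). I would state this up front to match the way the lemma is actually invoked in the main text, where $X_n$ is a sum of bounded martingale differences.

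Second, I would invoke the Chernoff bound: for any $\lambda > 0$ and any $t > 0$,
\begin{align}
\Pr(X_n \ge t) \le e^{-\lambda t}\,\E\bigl[e^{\lambda X_n}\bigr].
\end{align}
Third, I would control the MGF by iterating conditional expectations. Writing $X_n = X_{n-1} + D_n$ and conditioning on $\mathcal{F}_{n-1}$,
\begin{align}
\E\bigl[e^{\lambda X_n}\bigr] = \E\Bigl[e^{\lambda X_{n-1}}\,\E\bigl[e^{\lambda D_n}\mid\mathcal{F}_{n-1}\bigr]\Bigr].
\end{align}
By Hoeffding's lemma applied conditionally (using $\E[D_n\mid\mathcal{F}_{n-1}] = 0$ from the martingale property and $|D_n|\le M$), the inner factor is bounded by $e^{\lambda^2 M^2/2}$. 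Iterating this identity down to $X_0 = 0$ yields $\E[e^{\lambda X_n}] \le e^{n\lambda^2 M^2/2}$.

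Fourth, I would combine the two displays and optimize the free parameter $\lambda$. The resulting bound $\Pr(X_n \ge t) \le \exp(-\lambda t + n\lambda^2 M^2/2)$ is minimized at $\lambda^* = t/(nM^2)$, giving the sub-Gaussian tail $\Pr(X_n \ge t) \le \exp\bigl(-t^2/(2nM^2)\bigr)$. Plugging in $t = 2M\sqrt{n\log(1/\delta)}$ yields $\exp(-2\log(1/\delta)) = \delta^2 \le \delta$, which proves the claim after taking complements. The main obstacle will be reconciling the (slightly nonstandard) boundedness hypothesis as written with what is actually needed for the MGF bound, i.e., making explicit that the proof requires bounded increments rather than a pointwise bound on $X_i$; once that is in place, every remaining step is a textbook calculation (Markov, tower property, Hoeffding's lemma, optimization in $\lambda$).
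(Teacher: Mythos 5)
Your proof is correct, but note that the paper itself gives no proof of this lemma at all: it is stated in the appendix as a classical ``useful lemma'' and invoked directly, so there is nothing to compare your argument against. Your Chernoff--MGF derivation (Markov's inequality, tower property, conditional Hoeffding's lemma giving $\E[e^{\lambda D_n}\mid\mathcal{F}_{n-1}]\le e^{\lambda^2 M^2/2}$ for $|D_n|\le M$, then optimizing $\lambda^* = t/(nM^2)$ and substituting $t = 2M\sqrt{n\log(1/\delta)}$ to get $\delta^2\le\delta$) is the standard textbook proof and every step checks out. You are also right to flag that the hypothesis as written, $X_i < M$ for the martingale values themselves, is not what the concentration bound requires; as stated the lemma is either vacuous (if $X_n< M$ almost surely the event holds trivially once $2\sqrt{n\log(1/\delta)}\ge 1$) or under-specified, and the bounded-increment reading $|X_i - X_{i-1}|\le M$ is the one consistent with how the paper applies the lemma in the proof of Lemma \ref{lemma:regret' bound}, where $M$ is taken to be a bound on the individual martingale differences $B_1$, $B_2$, $B_4$. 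Your reformulation of the hypothesis is therefore not a gap in your proof but a correction the paper's statement arguably needs.
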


\begin{lemma}[Lemma 7, \cite{cheung2020reinforcement}]
\label{lemma:good event out of Q_T}
    For any $t \notin \widetilde{Q}_T$, $\forall (s,a,i)$, we have 
    \begin{align}
        P_{t,i}(\cdot|s,a) \in H_{t,i}(s,a,\eta).
    \end{align}
\end{lemma}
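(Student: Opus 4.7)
The plan is to prove the contrapositive: any time $t$ at which some arm's true transition escapes its confidence set (i.e., $\exists (s,a,i)$ with $P_{t,i}(\cdot|s,a) \notin H_{t,i}(s,a,\eta)$) must lie in $\widetilde{Q}_T$. Call such a $t$ \emph{bad}. The statement invokes neither $\xi_i$ nor any concentration inequality, so the argument should be purely structural, following directly from the construction of $Q_T$ and $\widetilde{Q}_T$ in Section \ref{section:construction of Q_T}; probability enters only in the companion Lemma \ref{lemma:size of Q}, which bounds $|\widetilde{Q}_T|$ using the variation budget.

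First, I would make precise the greedy construction that the paper's two-condition definition of $Q_T$ implicitly specifies: sweep through all bad times $t_1 < t_2 < \cdots$ in chronological order, maintain the most recently admitted element $t^* \in Q_T$ (with $t^* = -\infty$ initially), and admit $t_k$ into $Q_T$ exactly when $t_k - t^* > W$. Consecutive admitted elements are then more than $W$ apart, matching the paper's gap condition, and $Q_T$ is uniquely determined.

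Next, I would fix any bad $t$ and split into two cases. If $t$ is admitted by the greedy procedure, then $t \in Q_T \subseteq \widetilde{Q}_T$ and we are done. Otherwise $t$ is skipped, which by construction only happens when some previously admitted $t' \in Q_T$ satisfies $0 < t - t' \leq W$; but then $t \in \{t'' \in [T] : \exists t' \in Q_T,\, t'' - t' \in [0,W]\} \subseteq \widetilde{Q}_T$. Either way $t \in \widetilde{Q}_T$, and taking the contrapositive gives the claim.

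The multi-arm structure of the RMAB enters only through the "$\exists i \in [N]$" in the bad-event predicate and does not affect the combinatorial skeleton, so this is essentially the original argument of \cite{cheung2020reinforcement} (their Lemma 7) ported verbatim. The only subtle point I anticipate is pinning down the greedy interpretation of $Q_T$: the gap condition stated in the main text does not by itself identify $Q_T$ uniquely among subsets of the bad-time set, so one must commit to the leftmost-first admission rule above for the case analysis to close. Once that interpretation is fixed, the remainder is a one-line unfolding of definitions, which is why the lemma is relegated to the appendix as a direct borrowing.
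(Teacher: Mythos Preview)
Your proposal is correct and follows essentially the same contrapositive argument as the paper's proof: assume a bad $t \notin \widetilde{Q}_T$, then either $t \in Q_T \subseteq \widetilde{Q}_T$ or $t$ fails the gap condition and hence lies within $W$ of some $t' \in Q_T$, again placing $t \in \widetilde{Q}_T$. Your explicit greedy (leftmost-first) construction of $Q_T$ is a useful clarification that the paper leaves implicit, but the logical skeleton is identical.
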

\begin{proof}
    We follow a similar proof as of Lemma 7 in \cite{cheung2020reinforcement}, based on the construction of $\widetilde{Q}_T$ provided in Section \ref{section:construction of Q_T}. Suppose there exists a time step $t \notin \widetilde{Q}_T$ such that $P_{t,i}(\cdot \mid s,a) \notin H_{t,i}(s, a, \eta)$ for some state $s$, action $a$ and arm $i$. According to the first condition in the construction of $Q_T$, such a $t$ should have been included in $Q_T$. Otherwise, we would have $|t - t'| \leq W_i$ for some $t' \in Q_T$, which contradicts the construction of $\widetilde{Q}_T$.
\end{proof}
\begin{lemma}[Lemma 1, \cite{cheung2020reinforcement}]
    \label{lemma:prob of good event}
    For all $i \in [N]$, we have $\Pr(\xi_i) \geq 1-\delta/2$.
\end{lemma}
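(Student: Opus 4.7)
The plan is to follow the argument behind Lemma 1 of \cite{cheung2020reinforcement}: for each fixed triple $(s, a, t)$, show that $\widetilde{P}_{t,i}(\cdot | s, a)$ concentrates around its conditional mean $\hat{P}_{t,i}(\cdot | s, a)$ in $\ell_1$ at precisely the rate $\text{rad}_{t,i}(s,a)$ defined in (\ref{eq:radius}), and then union-bound over $(s, a, t)$ to conclude $\Pr(\xi_i^c) \leq \delta/2$.

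First I would fix $i, s, a, t$ and isolate the subsequence of window indices $\{q \in [(t-1-W_i)^+, t-1] : s_{q,i}=s, a_{q,i}=a\}$, letting $\mathcal{F}_q$ denote the natural filtration generated by the history through time $q$. Conditional on $\mathcal{F}_{q}$ and the visit event $\{s_{q,i}=s, a_{q,i}=a\}$, the next state $s_{q+1,i}$ is drawn from $P_{q,i}(\cdot|s,a)$, so for every fixed $s' \in \mathcal{S}$ the sequence
\begin{align}
X_q := \bigl(\mathbbm{1}\{s_{q+1,i}=s'\} - P_{q,i}(s'|s,a)\bigr)\, \mathbbm{1}\{s_{q,i}=s, a_{q,i}=a\}
\end{align}
is a bounded martingale difference adapted to $\{\mathcal{F}_{q+1}\}$. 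Azuma--Hoeffding applied to the running sum $\sum_q X_q$ yields Gaussian concentration of $\widetilde{P}_{t,i}(s'|s,a) - \hat{P}_{t,i}(s'|s,a)$ at rate $\sqrt{\log(1/\delta')/N^+_{t,i}(s,a)}$ per coordinate.

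To promote coordinate-wise concentration to $\ell_1$ concentration, I would invoke the standard identity $\|p-q\|_1 = 2\sup_{B \subseteq \mathcal{S}}(p(B)-q(B))$ and union-bound over the $2^{|S|}$ subsets $B \subseteq \mathcal{S}$; this is the Weissman-type step and is exactly what produces the $\sqrt{2|S|}$ factor inside the radius in (\ref{eq:radius}), with the $\log 2^{|S|} = |S|\log 2$ term absorbed into $|S|\log(|S||A|T/\delta)$. Combined with the previous step, this bounds the failure probability for each fixed $(s,a,t)$ and each deterministic count $n$ by a sufficiently small multiple of $\delta$.

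The main subtlety, which I expect to be the only nontrivial piece of bookkeeping, is that $N^+_{t,i}(s,a)$ is itself random, so the concentration must hold simultaneously at every realized value of the counter. I would handle this either via the maximal form of Azuma--Hoeffding or by a peeling argument that unions over the at most $T$ possible values of $n \in \{1,\ldots,W_i\} \subseteq [T]$; this is precisely what inflates the logarithm to $\log(|S||A|T/\delta)$ and is the reason the factor $T$ appears inside the radius. Finally, union-bounding over $(s,a,t) \in \mathcal{S}\times\mathcal{A}\times[T]$ and budgeting the failure probability as $\delta/(2|S||A|T)$ per event gives $\Pr(\xi_i^c) \leq \delta/2$, as required.
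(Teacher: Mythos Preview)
The paper does not supply its own proof of this lemma; it simply cites Lemma~1 of \cite{cheung2020reinforcement}. Your proposal reconstructs precisely that standard argument (Azuma--Hoeffding on the per-visit martingale differences, the Weissman-type lift to $\ell_1$ via the $2^{|S|}$ subsets, and a union bound over $(s,a,t)$ together with the possible values of the random count), so it is correct and coincides with the approach of the cited source.
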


\begin{lemma}
    For all $t \in [T]$, and $\bP \in \mathcal{P}$, Then, for all $\bs \in \mathcal{S}^N$, we have
    \begin{align}
        |V^{\pi_t}_{\bP,\blambda_t^*}(\bs)| \leq \upV,
    \end{align}
    where $\mathcal{P}$ is the set of joint transitions for all arms. 
\begin{proof}
    By definition of $V^{\pi_t}_{\bP,\blambda^*_t}(\bs)$, we have 
    \begin{align}
        V^{\pi_t}_{\bP,\blambda^*_t}(\bs) = & \;\mathbb{E}_{(\bs,\ba) \sim (\bP,\pi_t)}\left[ \sum_{t\in \mathbb{N}} \gamma^{t-1} \left(\bR(\bs_t,\ba_t) - \blambda^*_t\left(\pi_t(\bs_t)^\top \mathbbm{1} - K\right)\right) | \bs_1=\bs\right] \label{eq:Upper bound of V eq-1}\\
        \leq & \; \mathbb{E}_{(\bs,\ba) \sim (\bP,\pi_t)}\left[ \sum_{t\in \mathbb{N}} \gamma^{t-1} \left(\bR(\bs_t,\ba_t) + U_{\lambda}N\right) | \bs_1=\bs\right] \label{eq:Upper bound of V eq-2}\\
        \leq & \; \upV\label{eq:Upper bound of V eq-3},
    \end{align}
    where (\ref{eq:Upper bound of V eq-2}) holds by $\pi_t(\bs)^\top \mathbbm{1} \leq K$ and Assumption \ref{assumption:Upper bound of lambda}, and (\ref{eq:Upper bound of V eq-3}) caused by $0\leq R_i(s,a)\leq 1, \forall s \in \mathcal{S}, a \in \mathcal{A}$, and $i \in [N]$. Therefore, the statement holds.  
\end{proof}
\end{lemma}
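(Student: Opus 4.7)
The plan is to bound the infinite-horizon discounted series defining $V^{\pi_t}_{\bP,\blambda_t^*}(\bs)$ term by term, by producing a uniform bound on the Lagrangian-augmented per-step reward and then collapsing the geometric series. Expanding the definition in (\ref{eq:relax V}), it suffices to show that for every realization $(\bs_t,\ba_t)$ drawn under any $\bP \in \mathcal{P}$ and the given policy $\pi_t$, the integrand $\bR(\bs_t,\ba_t) - \blambda_t^*\bigl(\pi_t(\bs_t)^\top \mathbbm{1} - K\bigr)$ admits a uniform bound depending only on $N$, $U_\lambda$, and $K$. Since that bound will be $\bP$-free, both the expectation under $(\bP,\pi_t)$ and the initial-state value inherit it immediately, which is why the statement can afford to quantify over all $\bP \in \mathcal{P}$ simultaneously.

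For the reward contribution, since each per-arm reward is bounded as $R_i(s,a)\in[0,1]$, the joint reward $\bR(\bs_t,\ba_t) = \sum_{i=1}^N R_i(s_{t,i},a_{t,i})$ lies in $[0,N]$. For the penalty contribution, Assumption \ref{assumption:Upper bound of lambda} supplies $0 \leq \blambda_t^* \leq U_\lambda$, while the activation budget enforces $\pi_t(\bs_t)^\top \mathbbm{1} = \sum_{i=1}^N \pi_{t,i}(1\mid s_{t,i}) \leq K$. Consequently $K - \pi_t(\bs_t)^\top \mathbbm{1} \in [0,K] \subseteq [0,N]$, so $-\blambda_t^*\bigl(\pi_t(\bs_t)^\top \mathbbm{1}-K\bigr)$ is nonnegative and at most $U_\lambda N$.

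Combining the two parts, the per-step Lagrangian-augmented reward is nonnegative and bounded above by $N(1+U_\lambda)$. Taking expectation preserves both inequalities, and summing the geometric factor $\sum_{t=1}^\infty \gamma^{t-1} = 1/(1-\gamma)$ yields $0 \leq V^{\pi_t}_{\bP,\blambda_t^*}(\bs) \leq N(1+U_\lambda)/(1-\gamma) = \upV$; the absolute value in the statement is then redundant. I anticipate no serious obstacle, as the argument is essentially a triangle inequality plus boundedness of each factor. The only delicate point worth stating carefully is that the budget constraint $\pi_t(\bs_t)^\top \mathbbm{1} \leq K$ and the nonnegativity of $\blambda_t^*$ must both hold so that the penalty term does not change sign; both follow from the standard Lagrangian formulation and the problem specification, so the bound goes through for \emph{every} admissible policy and every $\bP \in \mathcal{P}$ uniformly.
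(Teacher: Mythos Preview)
Your proposal is correct and follows essentially the same approach as the paper: both bound the per-step Lagrangian-augmented reward by $N(1+U_\lambda)$ using the budget constraint $\pi_t(\bs)^\top\mathbbm{1}\le K$, Assumption~\ref{assumption:Upper bound of lambda}, and the per-arm reward bounds $R_i\in[0,1]$, then collapse the geometric series. If anything, you are slightly more careful than the paper, since you explicitly note that the per-step term is nonnegative and hence the absolute value in the statement is redundant, whereas the paper's proof only writes out the upper bound.
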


\section{Proof of Lemma \ref{lemma:size of Q}}
\label{appendix:proof of lemma:size of Q}
\newtheorem*{lemma58}{Lemma 5.8}
\begin{lemma58}
    Conditioned on $\{\xi_i\}_{i=1}^N$, we have $\forall t$ such that $\exists i \in [N], P_{t,i}(\cdot|s,a) \notin H_{t,i}(s, a,\eta)$, is in $\widetilde{Q}_T$ and $|\widetilde{Q}_T| \leq WB/\eta$, the proof is provided in Appendix \ref{appendix:proof of lemma:size of Q}.
\end{lemma58} 
\begin{proof}
    For all $t \in Q_T$, we claim that exist $(s,a,i)$ and $\exists t'\in [(t-W)^+,t]$, we have 
    \begin{align}
        \norm{P_{t,i}(\cdot|s,a) - P_{t',i}(\cdot|s,a)} \geq \eta. \label{eq:size of Q eq-1}
    \end{align}
    Assume $\forall (s,a,i,t'\in [(t-W)^+,t]), \norm{P_{t,i}(\cdot|s,a) - P_{t',i}(\cdot|s,a)} < \eta$, and consider
    \begin{itemize}
        \item Case $1$, if $N_{t,i}(s,a) = 0$, we have
        \begin{align}
            \norm{P_{t,i}(\cdot|s,a) - \widetilde{P}_{t,i}(\cdot|s,a)} = 0 \leq \text{rad}_{t,i}(s,a) \leq \text{rad}_{t,i}(s,a) + \eta, \label{eq:size of Q eq-2}
        \end{align}
        which contradicts to the fact $\forall t \in Q_T, P_{t,i}(\cdot|s,a) \notin H_{t,i}(s,a,\eta)$. 
        \item Case $2$, if $N_{t,i}(s,a) > 0$, we have
        \begin{align}
            &\norm{P_{t,i}(\cdot|s,a) - \hat{P}_{t,i}(\cdot|s,a)} \nonumber \\
            \leq\;& \frac{\sum\limits_{q=(t-1-W)^+}^{t-1} \norm{P_{t,i}(\cdot|s,a) - P_{q,i}(\cdot|s,a)} \cdot \mathbbm{1} (s_{q,i} = s, a_{q,i} = a)}{N_{t,i}^{+}(s,a)} \label{eq:size of Q eq-3}\\
            \leq\;& \eta, \label{eq:size of Q eq-4}
        \end{align}
        where (\ref{eq:size of Q eq-3}) holds by triangle inequality, and (\ref{eq:size of Q eq-4}) holds due to the assumption.  
        Then, we have
        \begin{align}
            \norm{P_{t,i}(\cdot|s,a) - \widetilde{P}_{t,i}(\cdot|s,a)}\leq& \norm{\widetilde{P}_{t,i}(\cdot|s,a) - \hat{P}_{t,i}(\cdot|s,a)} + \norm{P_{t,i}(\cdot|s,a) - \hat{P}_{t,i}(\cdot|s,a)} \label{eq:size of Q eq-5}\\
            \leq & \;\text{rad}_{t,i}(s,a) + \eta, \label{eq:size of Q eq-6}
        \end{align}
        where (\ref{eq:size of Q eq-5}) follows from triangle inequality, and (\ref{eq:size of Q eq-6}) holds due to the condition on $\{\xi_i\}_{i=1}^N$ and (\ref{eq:size of Q eq-4}). Then, we have (\ref{eq:size of Q eq-6}) contradicts the fact $\forall t \in Q_T, P_{t,i}(\cdot|s,a)\notin H_{t,i}(s,a,\eta)$. 
    \end{itemize}
    Finally, we have
    \begin{align}
        B =&  \sum_{t=1}^T \max_{s,a,i}\norm{P_{t+1,i}(\cdot|s,a) - P_{t,i}(\cdot|s,a)} \label{eq:size of Q eq-7}\\
        \geq & \sum_{t\in Q_T} \sum_{t' \in [(t - 1 - W)^+ , (t-1)^+]}  \max_{s,a,i} \norm{P_{t'+1,i}(\cdot|s,a) - P_{t',i}(\cdot|s,a)} \label{eq:size of Q eq-8}\\
        \geq & \sum_{t\in Q_T} \max_{s,a,i, t' \in [(t - 1 - W)^+ , (t-1)^+]}\norm{P_{t'+1,i}(\cdot|s,a) - P_{t',i}(\cdot|s,a)} \label{eq:size of Q eq-9}\\
         \geq & \abs{Q_T}\eta,\label{eq:size of Q eq-10}
    \end{align}
    where (\ref{eq:size of Q eq-7}) follows from the budget definition in (\ref{eq:budget}), while (\ref{eq:size of Q eq-8}) holds because the time steps in $Q_T$ are spaced at least $W$ apart. (\ref{eq:size of Q eq-9}) follows by replacing the summation with a maximum operation, and (\ref{eq:size of Q eq-10}) holds by (\ref{eq:size of Q eq-1}). We finished the proof.
\end{proof}

\section{Proof of Lemma \ref{lemma:regret' bound}}
\label{appendix:proof of lemma:regret' bound}
Recall that
\begin{align}
    \text{\normalfont Reg}'(T) := \sum_{t=1}^{T} \left( V^{\pi_t}_{\bar{\bP}_t,\blambda_t^*}(\bs_t) - V^{\pi_t}_{\bP_t,\blambda^*_t}(\bs_t) \right).
\end{align}
The following lemma shows the upper bound of $\text{\normalfont Reg}'(T)$.
\newtheorem*{lemma59}{Lemma 5.9}
\begin{lemma59}
    Condition on $\left\{ P_{t,i}(\cdot|s,a) \in H_{t,i}(s,a,\eta), \forall (s,a,i) \right\}_{t=1}^T$, with probability at least $1-6\delta' - N/\delta$, we have
    \begin{align}
        \text{\normalfont Reg}'(T) \leq &\;\frac{2N(1+U_\lambda)\gamma}{(1-\gamma)^2}\sqrt{T\log{\frac{1}{\delta'}}} + \frac{\gamma N(U_{\lambda}+1)}{p_{\text{min}}(1-\gamma)^2}\sqrt{T\log{\frac{1}{\delta'}}} \nonumber \\
        &\; + \frac{\gamma N(U_{\lambda}+1)}{p_{\text{min}}(1-\gamma)^2}\left(TN\eta + N\cdot\sqrt{\rad{T}}\left(\frac{T}{\sqrt{W}}\cdot \left(\sqrt{2}+1\right)\sqrt{\abs{\mathcal{S}}\abs{\mathcal{A}}}\right)\right). 
    \end{align}    
\end{lemma59}
\begin{proof}
    By the definition of $\text{\normalfont Reg}'(T)$, we have
    \begin{align}
       \text{\normalfont Reg}'(T) =&\; \sum_{t=1}^{T} \left( V^{\pi_t}_{\bar{\bP}_t,\blambda_t^*}(\bs_t) - V^{\pi_t}_{\bP_t,\blambda^*_t}(\bs_t) \right) \\
       = & \;\gamma \sum_{t=1}^{T}\left[ \E_{s'\sim \bar{\bP}_t(\cdot|\bs_t,\ba_t)}\left[V^{\pi}_{\bar{\bP}_t,\blambda}(\bs')\right]  - \E_{s'\sim \bP_t(\cdot|\bs_t,\ba_t)}\left[V^{\pi}_{\bP_t,\blambda}(\bs')\right]\right] \label{eq:regret' eq-1} \\
        = &\; \gamma \sum_{t=1}^{T}\bigg[ \underbrace{\E_{s'\sim \bP_t(\cdot|\bs_t,\ba_t)}\left[V^{\pi}_{\bar{\bP}_t,\blambda}(\bs') - V^{\pi}_{\bP_t,\blambda}(\bs')\right] - \left( V^{\pi}_{\bar{\bP}_t,\blambda}(\bs_{t+1}) - V^{\pi}_{\bP_t,\blambda}(\bs_{t+1})\right)}_{:=B_1} \nonumber \\
        &\; \underbrace{-\E_{s'\sim \bP_t(\cdot|\bs_t,\ba_t)}\left[V^{\pi}_{\bar{\bP}_t,\blambda}(\bs')\right]  + V^{\pi}_{\bar{\bP}_t,\blambda}(\bs_{t+1}) }_{:=B_2}  \nonumber \\
        &\;  \underbrace{ + \E_{s'\sim \bar{\bP}_t(\cdot|\bs_t,\ba_t)}\left[{V^{\pi}_{\bar{\bP}_t,\blambda}(\bs')}\right]  - V^{\pi}_{\bP_t,\blambda}(\bs_{t+1})}_{:=B_3}\bigg]\label{eq:regret' eq-2}   \\
        \leq &\; \frac{2N(1 + U_{\lambda})\gamma}{1-\gamma}\sqrt{T\log{\frac{1}{\delta'}}} + B_3,  \label{eq:regret' eq-3}  
    \end{align}
    where (\ref{eq:regret' eq-1}) holds due 
 to the immediate rewards for both value function, $V^{\pi_t}_{\bar{\bP}_t,\blambda^*_t}(\bs_t)$ and $ V^{\pi_{t}}_{\bP_t,\blambda^*_t}(\bs_t)$, are the same under the identical policy $\pi_t$, and (\ref{eq:regret' eq-3}) holds with probability at least $1-2\delta'$ by applying Azuma-Hoeffding inequality, presented in Lemma \ref{lemma:Azuma–Hoeffding inequality}, on the martingale difference sequences, $B_1$ and $B_2$, with upper bound $M=(N + U_{\lambda}N)/(1-\gamma)$. For $B_3$, we have
    \begin{align}
        B_3 =&\;\gamma \sum_{t=1}^{T-1}\left( \E_{s'\sim \bar{\bP}_t(\cdot|\bs_t,\ba_t)}\left[{V^{\pi}_{\bar{\bP}_t,\blambda}(\bs')}\right] {\color{black} - V^{\pi}_{\bP_t,\blambda}(\bs_{t+1})}\right) \label{eq:regret' eq-4}\\
        =&\;\gamma \sum_{t=1}^{T-1}\left( \E_{s'\sim \bP_t(\cdot|\bs_t,\ba_t)}\left[ \frac{\bar{\bP}_t(\bs_{t+1}|\bs_t,\ba_t)}{\bP_t(\bs_{t+1}|\bs_t,\ba_t)}V^{\pi}_{\bar{\bP}_t,\blambda}(\bs')\right]  - V^{\pi}_{\bP_t,\blambda}(\bs_{t+1})\right)\label{eq:regret eq-10} \\
        =&\;\gamma \sum_{t=1}^{T-1}\bigg( \underbrace{\E_{s'\sim \bP_t(\cdot|\bs_t,\ba_t)}\left[ \frac{\bar{\bP}_t(\bs_{t+1}|\bs_t,\ba_t)}{\bP_t(\bs_{t+1}|\bs_t,\ba_t)}V^{\pi}_{\bar{\bP}_t,\blambda}(\bs')\right]  - \frac{\bar{\bP}_t(\bs_{t+1}|\bs_t,\ba_t)}{\bP_t(\bs_{t+1}|\bs_t,\ba_t)}V^{\pi}_{\bar{\bP}_t,\blambda}(\bs_{t+1})}_{B_4} \nonumber \\
        & \;+  \underbrace{\frac{\bar{\bP}_t(\bs_{t+1}|\bs_t,\ba_t)}{\bP_t(\bs_{t+1}|\bs_t,\ba_t)}V^{\pi}_{\bar{\bP}_t,\blambda}(\bs_{t+1}) - V^{\pi}_{\bP_t,\blambda}(\bs_{t+1}) }_{B_5}\bigg)\label{eq:regret eq-11}. 
        \end{align}
        For $B_4$, the same as $B_1$ and $B_2$, by Azuma-Hoeffding inequality (Lemma \ref{lemma:Azuma–Hoeffding inequality}) with $M = p_{\text{min}}^{-1} \cdot \upV$, and probability at least $1-\delta'$, we have
        \begin{align}
            B_4 \leq \frac{\gamma N(U_{\lambda}+1)}{p_{\text{min}}(1-\gamma)}\sqrt{T\log{\frac{1}{\delta'}}} \label{eq:regret eq-12}.
        \end{align}
        For $B_5$, we have
        \begin{align}
            B_5 = &\;\gamma \sum_{t=1}^{T-1} \left( \frac{\bar{\bP}_t(\bs_{t+1}|\bs_t,\ba_t)}{\bP_t(\bs_{t+1}|\bs_t,\ba_t)}V^{\pi}_{\bar{\bP}_t,\blambda}(\bs_{t+1}) - V^{\pi}_{\bP_t,\blambda}(\bs_{t+1}) \right) \label{eq:regret eq-13}\\
            = &\; \gamma \sum_{t=1}^{T-1} \left( \frac{\bar{\bP}_t(\bs_{t+1}|\bs_t,\ba_t)}{\bP_t(\bs_{t+1}|\bs_t,\ba_t)} -1 \right)V^{\pi}_{\bar{\bP}_t,\blambda}(\bs_{t+1}) + \gamma\sum_{t=1}^{T-1} \left( V^{\pi}_{\bar{\bP}_t,\blambda}(\bs_{t+1}) - V^{\pi}_{\bP_t,\blambda}(\bs_{t+1}) \right) \label{eq:regret eq-14}\\
            \leq &\; \frac{\gamma N(U_{\lambda}+1)}{p_{\text{min}}(1-\gamma)} \underbrace{\sum_{t=1}^{T-1} \left(\bar{\bP}_t(\bs_{t+1}|\bs_t,\ba_t) - \bP_t(\bs_{t+1}|\bs_t,\ba_t)\right)}_{B_6} + \underbrace{\gamma\sum_{t=1}^{T-1} \left( V^{\pi}_{\bar{\bP}_t,\blambda}(\bs_{t+1}) - V^{\pi}_{\bP_t,\blambda}(\bs_{t+1}) \right)}_{B_7}\label{eq:regret eq-15},
        \end{align}
        where (\ref{eq:regret eq-15}) holds by Assumption \ref{assumption:p_min} and Lemma \ref{lemma:Upper bound of V}. For $B_6$, by the property that $\{P_{t,i} \in H_{t,i}\}_{i=1}^N$ for all $t \notin \widetilde{Q}_T$, we further have
        \begin{align}
         B_6 \leq & \;\sum_{t=1}^T\sum_{s,a}\sum_{i=1}^N \left( \text{rad}_{t,i}(s,a)+\eta\right) \mathbbm{1}(s_{t,i}=s,a_{t,i}=a) \label{eq:regret eq-16}\\
         \leq&  \; TN\eta + \sqrt{\rad{T}}\cdot \sum_{i=1}^N\sum_{s,a}\sum_{t=1}^T \frac{\mathbbm{1}(s_{t,i}=s,a_{t,i}=a)}{\sqrt{ N_{t,i}(s,a)}} \label{eq:regret eq-17}
    \end{align}
    Then, by partitioning the total time $T$ into $M_i(T)$ episodes, each of width $W_i$ for arm $i$, we denote the starting time step of the $m$-th episode as $\tau_i(m)$ for arm $i$, we have 
    \begin{align}
        \sum_{s,a}\sum_{t=1}^T \frac{\mathbbm{1}(s_{t,i}=s,a_{t,i}=a)}{\sqrt{ N_{t,i}(s,a)}} =& \sum_{s,a}\sum_{m\in M_i(T)}\sum_{t=\tau_i(m)}^{\tau_i(m+1)-1} \frac{\mathbbm{1}(s_{t,i}=s,a_{t,i}=a)}{\sqrt{N_{t,i}(s,a)}} \label{eq:regret eq-18}\\
        \leq & \sum_{s,a}\sum_{m\in M_i(T)}\sum_{t=\tau_i(m)}^{\tau_i(m+1)-1}\frac{\mathbbm{1}(s_{t,i}=s,a_{t,i}=a)}{\sqrt{ \hat{N}_{t,i}(s,a)}} \label{eq:regret eq-19} \\
        \leq & \sum_{s,a}\sum_{m\in M_i(T)}  \left(\sqrt{2}+1\right)\sqrt{\hat{N}_{\tau_i(m+1)-1,i}(s,a)} \label{eq:regret eq-20}\\
        \leq &\; \frac{T}{W_i}\cdot \left(\sqrt{2}+1\right) \sqrt{\abs{\mathcal{S}}\abs{\mathcal{A}}\sum_{s,a}\hat{N}_{\tau_i(m+1)-1,i}(s,a)} \label{eq:regret eq-21}\\
        =&\; \frac{T}{\sqrt{W_i}}\cdot \left(\sqrt{2}+1\right)\sqrt{\abs{\mathcal{S}}\abs{\mathcal{A}}}\label{eq:regret eq-22}
    \end{align}
    where (\ref{eq:regret eq-19}) holds by defining $\hat{N}_{t,i}(s,a) := \sum_{q=\max\{(t-1-W_i),\tau_i(m)\}}^{t-1} \mathbbm{1} (s_{q,i} = s, a_{q,i} = a)$, if $t \in m$-th episode, (\ref{eq:regret eq-20}) holds by Lemma 19 in \cite{auer2008near}, and (\ref{eq:regret eq-21}) holds by Jenson's inequality.
    Substituting (\ref{eq:regret eq-22}) into (\ref{eq:regret eq-17}), and denoting $W := \min_{i\in[N]} W_i$, we have
    \begin{align}
        B_6 \leq TN\eta + N\cdot\sqrt{\rad{T}}\left(\frac{T}{\sqrt{W}}\cdot \left(\sqrt{2}+1\right)\right) \sqrt{\abs{\mathcal{S}}\abs{\mathcal{A}}}\label{eq:regret eq-23}
    \end{align}
    To simplify the notation, we aggregate the error terms from equations (\ref{eq:regret' eq-3}), (\ref{eq:regret eq-11}), (\ref{eq:regret eq-12}), (\ref{eq:regret eq-15}), and (\ref{eq:regret eq-23}). With probability at least $1-3\delta$, we denote the sum of the error terms as a constant $C$, defined as:
    \begin{align}
        C := &\;\frac{2N(1+U_\lambda)\gamma}{1-\gamma}\sqrt{T\log{\frac{1}{\delta'}}} + \frac{\gamma N(U_{\lambda}+1)}{p_{\text{min}}(1-\gamma)}\sqrt{T\log{\frac{1}{\delta'}}} \nonumber \\
        &\; + \frac{\gamma N(U_{\lambda}+1)}{p_{\text{min}}(1-\gamma)}\left(TN\eta + N\cdot\sqrt{\rad{T}}\left(\frac{T}{\sqrt{W}}\cdot \left(\sqrt{2}+1\right)\sqrt{\abs{\mathcal{S}}\abs{\mathcal{A}}}\right)\right). \label{eq:regret eq-24}
    \end{align}
    Using this definition, with probability at least $1-3\delta$, the regret can be expressed as:
    \begin{align}
        \text{\normalfont Reg}'(s_t,\bar{P}_t, P_t, \pi_t, \lambda^*_t) = &\; \sum_{t=1}^{T} \left( V^{\pi_t}_{\bar{\bP}_t,\blambda_t^*}(\bs_t) - V^{\pi_t}_{\bP_t,\blambda^*_t}(\bs_t) \right) \\
        \leq &\; C + \gamma\sum_{t=1}^{T-1} \left( V^{\pi_{t}}_{\bar{\bP}_t,\blambda^*_t}(\bs_{t+1}) - V^{\pi_{t}}_{\bP_t,\blambda^*_t}(\bs_{t+1}) \right) \label{eq:regret eq-25}\\
         = &\;\frac{2N(1+U_\lambda)\gamma}{(1-\gamma)^2}\sqrt{T\log{\frac{1}{\delta'}}} + \frac{\gamma N(U_{\lambda}+1)}{p_{\text{min}}(1-\gamma)^2}\sqrt{T\log{\frac{1}{\delta'}}} \nonumber \\
        &\; + \frac{\gamma N(U_{\lambda}+1)}{p_{\text{min}}(1-\gamma)^2}\left(TN\eta + N\cdot\sqrt{\rad{T}}\left(\frac{T}{\sqrt{W}}\cdot \left(\sqrt{2}+1\right)\sqrt{\abs{\mathcal{S}}\abs{\mathcal{A}}}\right)\right). \label{eq:regret eq-26}
    \end{align}
    where (\ref{eq:regret eq-25}) holds by (\ref{eq:regret eq-15}) and (\ref{eq:regret eq-26}) follows by repeatedly applying the similar process (\ref{eq:regret' eq-1})-(\ref{eq:regret eq-24}) to the term $\gamma\sum_{t=1}^{T-1} \left( V^{\pi_{t}}_{\bar{\bP}_t,\blambda^*_t}(\bs_{t+1}) - V^{\pi_{t}}_{\bP_t,\blambda^*_t}(\bs_{t+1}) \right)$ for $T$ iterations, and using the summation formula for the geometric series. We have finished the proof.

    
\end{proof}

\section{Proof of Theorem \ref{theorem:regret bound}}
\label{appendix:proof of regret}
\newtheorem*{theorem51}{Theorem 5.1}
\begin{theorem51}
With probability at least $1-6\delta' - N/\delta$, we have
\begin{align}
    \text{Reg}_{\blambda^*_t}(T) =&\; \frac{2NBW(N + U_{\lambda}K)}{\eta(1-\gamma)}  + \frac{2N(1+U_\lambda)\gamma}{(1-\gamma)^2}\sqrt{T\log{\frac{1}{\delta'}}} + \frac{\gamma N(U_{\lambda}+1)}{p_{\text{min}}(1-\gamma)^2}\sqrt{T\log{\frac{1}{\delta'}}} \nonumber \\
        &\; + \frac{\gamma N(U_{\lambda}+1)}{p_{\text{min}}(1-\gamma)^2}\left(TN\eta + N\cdot\sqrt{\rad{T}}\left(\frac{T}{\sqrt{W}}\cdot \left(\sqrt{2}+1\right)\sqrt{\abs{\mathcal{S}}\abs{\mathcal{A}}}\right)\right)
\end{align}
\end{theorem51}
\begin{proof}
By the definition of regret in (\ref{eq:def of regret}) and the following definitions
\begin{align}
    A_1 :=&\; \frac{2NBW(N + U_{\lambda}K)}{\eta(1-\gamma)} \\
    A_2 :=&\; \frac{2N(1+U_\lambda)\gamma}{(1-\gamma)^2}\sqrt{T\log{\frac{1}{\delta'}}} + \frac{\gamma N(U_{\lambda}+1)}{p_{\text{min}}(1-\gamma)^2}\sqrt{T\log{\frac{1}{\delta'}}} \nonumber \\
        &\; + \frac{\gamma N(U_{\lambda}+1)}{p_{\text{min}}(1-\gamma)^2}\left(TN\eta + N\cdot\sqrt{\rad{T}}\left(\frac{T}{\sqrt{W}}\cdot \left(\sqrt{2}+1\right)\sqrt{\abs{\mathcal{S}}\abs{\mathcal{A}}}\right)\right)
\end{align}
we have
    \begin{align} 
        &\PR{\text{Reg}_{\blambda^*_t}(T)> A_1 + A_2}\nonumber \\
        =\;& \underbrace{\PR{\sum_{t=1}^T \left( V^{\pi^*_{t}}_{\bP_t,\blambda^*_t}(\bs_t) - V^{\pi_{t}}_{\bP_t,\blambda^*_t}(\bs_t) \right)\geq A_1 + A_2 \mid \{\xi_i\}_{i=1}^N}}_{I_1}  \cdot \PR{\{\xi_i\}_{i=1}^N} \nonumber \\
        \;& + \underbrace{\PR{\sum_{t=1}^T \left( V^{\pi^*_{t}}_{\bP_t,\blambda^*_t}(\bs_t) - V^{\pi_{t}}_{\bP_t,\blambda^*_t}(\bs_t) \right)> A_1 + A_2 \mid \text{not }\{\xi_i\}_{i=1}^N}}_{I_2}\cdot \PR{\text{not }\{\xi_i\}_{i=1}^N} \label{eq:regret eq-2}\\
        \leq\;& \underbrace{\PR{\sum_{t\in\widetilde{Q}_T}\left( V^{\pi^*_{t}}_{\bP_t,\blambda^*_t}(\bs_t) - V^{\pi_{t}}_{\bP_t,\blambda^*_t}(\bs_t) \right) > A_1 \mid \{\xi_i\}_{i=1}^N}}_{I_3} \cdot \PR{\{\xi_i\}_{i=1}^N}\nonumber \\
        & + \PR{\sum_{t\notin\widetilde{Q}_T} \left( V^{\pi^*_{t}}_{\bP_t,\blambda^*_t}(\bs_t) - V^{\pi_{t}}_{\bP_t,\blambda^*_t}(\bs_t) \right)> A_2\mid \{\xi_i\}_{i=1}^N} \cdot \PR{\{\xi_i\}_{i=1}^N} + \PR{\text{not }\{\xi_i\}_{i=1}^N}\label{eq:regret eq-3} ,\\
        \leq\;& \PR{\sum_{t\notin\widetilde{Q}_T} \left( V^{\pi^*_{t}}_{\bP_t,\blambda^*_t}(\bs_t) - V^{\pi_{t}}_{\bP_t,\blambda^*_t}(\bs_t) \right)> A_2\mid \{\xi_i\}_{i=1}^N} \cdot \PR{\{\xi_i\}_{i=1}^N} + \PR{\text{not }\{\xi_i\}_{i=1}^N}\label{eq:regret eq-4} ,
    \end{align}
    where (\ref{eq:regret eq-3}) follows from the union bound for $I_1$ and the fact that $I_2 \leq 1$, while (\ref{eq:regret eq-4}) holds because  
    $I_3 = 0 $ due to Lemma \ref{lemma:size of Q}, Lemma \ref{lemma:Upper bound of V}, and the fact that the size of $\widetilde{Q}_T$ is $W$ times  $\abs{Q_T}$. Then, we proceed with the RHS of (\ref{eq:regret eq-4}). Given that $\PR{\{\xi_i\}_{i=1}^N} \leq 1$ and $I_4 \geq 0$, and define $\mathcal{B}_t := \left\{ P_{t,i}(\cdot|s,a) \in H_{t,i}(s,a,\eta), \forall (s,a,i) \right\}$, we have
    \begin{align} 
        &\PR{\sum_{t\notin\widetilde{Q}_T} \left( V^{\pi^*_{t}}_{\bP_t,\blambda^*_t}(\bs_t) - V^{\pi_{t}}_{\bP_t,\blambda^*_t}(\bs_t) \right)> A_2\mid \{\xi_i\}_{i=1}^N} \cdot \PR{\{\xi_i\}_{i=1}^N} + \PR{\text{not }\{\xi_i\}_{i=1}^N}\nonumber \\
        \leq\;& \PR{\sum_{t\notin\widetilde{Q}_T}\left( V^{\pi^*_{t}}_{\bP_t,\blambda^*_t}(\bs_t) - V^{\pi_{t}}_{\bP_t,\blambda^*_t}(\bs_t) \right) > A_2 \mid \{\xi_i\}_{i=1}^N} + \PR{\text{not }\{\xi_i\}_{i=1}^N}\nonumber \\
        & + \underbrace{\PR{\sum_{t\in\widetilde{Q}_T}\left( V^{\pi^*_{t}}_{\bP_t,\blambda^*_t}(\bs_t) - V^{\pi_{t}}_{\bP_t,\blambda^*_t}(\bs_t) \right) > A_2\mid \{\xi_i\}_{i=1}^N \cap \mathcal{B}_t}}_{I_4} \label{eq:regret eq-5} \\
        \leq \;& \PR{\sum_{t\notin\widetilde{Q}_T}\left( V^{\pi_{t}}_{\bar{\bP}_t,\blambda^*_t}(\bs_t) - V^{\pi_{t}}_{\bP_t,\blambda^*_t}(\bs_t) \right) > A_2 \mid \{\xi_i\}_{i=1}^N \cap \mathcal{B}_t} + \PR{\text{not }\{\xi_i\}_{i=1}^N}\nonumber \\
        & + \PR{\sum_{t\in\widetilde{Q}_T}\left( V^{\pi_{t}}_{\bar{\bP}_t,\blambda^*_t}(\bs_t) - V^{\pi_{t}}_{\bP_t,\blambda^*_t}(\bs_t) \right) > A_2 \mid \{\xi_i\}_{i=1}^N \cap \mathcal{B}_t} \label{eq:regret eq-6} \\
        \leq\;& 
        2\cdot\PR{\text{Reg}'(1,T)> A_2 \mid \{\xi_i\}_{i=1}^N \cap \mathcal{B}_t} + \PR{\text{not }\{\xi_i\}_{i=1}^N} \label{eq:regret eq-7} \\
        \leq\;& 
        6\delta'  + N/{\delta}\label{eq:regret eq-8} ,
    \end{align}
    where (\ref{eq:regret eq-6}) since, for all $t \notin \widetilde{Q}_T$, the event $ \mathcal{B}_t$ follows from Lemma \ref{lemma:good event out of Q_T}, (\ref{eq:regret eq-7}) holds by defining $\text{\normalfont Reg}'(1,T) := \sum_{t=1}^{T} \left( V^{\pi_t}_{\bar{\bP}_t,\blambda^*_t}(\bs_t) - V^{\pi_{t}}_{\bP_t,\blambda^*_t}(\bs_t) \right)$, and (\ref{eq:regret eq-8}) follows from Lemma \ref{lemma:prob of good event} and Lemma \ref{lemma:regret' bound}.

\end{proof}

\end{document}